\newcolumntype{C}[1]{>{\centering\arraybackslash}p{#1}}
\newcolumntype{L}[1]{>{\arraybackslash}p{#1}}
\definecolor{mycolor1}{HTML}{1f77b4}
\definecolor{mycolor2}{HTML}{ff7f0e}
\definecolor{mycolor3}{HTML}{d62728}
\definecolor{dcyan}{HTML}{D62728}
\theoremstyle{plain}
\newtheorem{theorem}{Theorem}[section]
\newtheorem{lemma}[theorem]{Lemma}
\theoremstyle{definition}
\theoremstyle{remark}
\icmltitlerunning{Domain-Knowledge-Free Diffusion-based Data Augmentation Can Enhance Unsupervised Contrastive Learning}
\begin{document}

\twocolumn[
\icmltitle{DiffAug: Enhance Unsupervised Contrastive Learning \\
with Domain-Knowledge-Free Diffusion-based Data Augmentation}

\begin{icmlauthorlist}
\icmlauthor{Zelin Zang}{westlake,damo,nus} 
\icmlauthor{Hao Luo}{damo,hupan}
\icmlauthor{Kai Wang}{nus}
\icmlauthor{Panpan Zhang}{nus}
\icmlauthor{Fan Wang}{damo,hupan}
\icmlauthor{Stan.Z Li}{westlake}
\icmlauthor{Yang You}{nus}
\end{icmlauthorlist}

\icmlaffiliation{westlake}{AI Lab, Research Center for Industries of the Future, Westlake University, China}
\icmlaffiliation{damo}{DAMO Academy, Alibaba Group}
\icmlaffiliation{hupan}{Hupan Lab, Zhejiang Province}
\icmlaffiliation{nus}{National University of Singapore}

\icmlcorrespondingauthor{Stan.Z Li}{Stan.Z.Li@westlake.edu.cn}

\icmlkeywords{Machine Learning, ICML}

\vskip 0.3in
]

\printAffiliationsAndNotice{}  %

\begin{abstract}
    Unsupervised Contrastive learning has gained prominence in fields such as vision, and biology, leveraging predefined positive/negative samples for representation learning. Data augmentation, categorized into hand-designed and model-based methods, has been identified as a crucial component for enhancing contrastive learning. However, hand-designed methods require human expertise in domain-specific data while sometimes distorting the meaning of the data. In contrast, generative model-based approaches usually require supervised or large-scale external data, which has become a bottleneck constraining model training in many domains. To address the problems presented above, this paper proposes DiffAug, a novel unsupervised contrastive learning technique with diffusion mode-based positive data generation. DiffAug consists of a semantic encoder and a conditional diffusion model; the conditional diffusion model generates new positive samples conditioned on the semantic encoding to serve the training of unsupervised contrast learning. With the help of iterative training of the semantic encoder and diffusion model, DiffAug improves the representation ability in an uninterrupted and unsupervised manner. Experimental evaluations show that DiffAug outperforms hand-designed and SOTA model-based augmentation methods on DNA sequence, visual, and bio-feature datasets. The code for review is released at \url{https://github.com/zangzelin/code_diffaug}.
\end{abstract}

\section{Introduction}

Contrastive learning, as shown by many studies~\citep{he_momentum_2020,chen_simple_2020,cui2021parametric,wang2022contrastive,assran2022masked,zang2023boosting}, has become important in areas like vision~\citep{he2021masked, zang2022dlme}, natural language processing~\citep{rethmeier2023primer}, and biology~\citep{yu2023enzyme,krishnan2022self}. The key to contrastive learning~(CL) lies in designing appropriate data augmentation methods. Many studies~\citep{tian2020makes,zhang2022rethinking, peng2022crafting, zhang2023adaptive} have found that data augmentation helps CL by making it more robust and preventing model collapse problems.

\begin{figure*}
    \centering
    \includegraphics[width=0.99\textwidth]{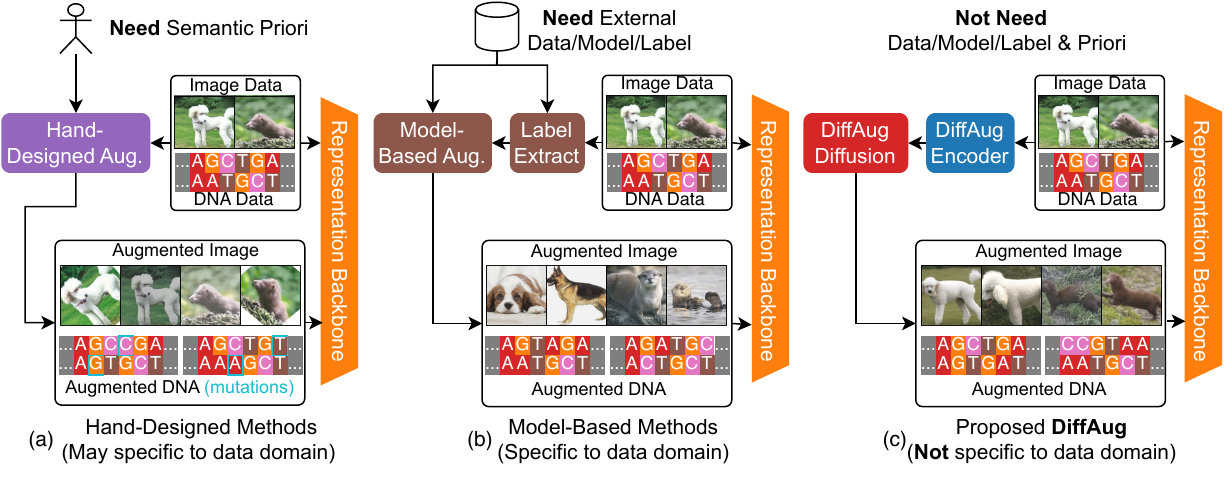}
    \caption{\textbf{Comparison of DiffAug with existing augmentation strategy.} (a) Hand-designed augmentation is based on human priori that generate new data with different feature but semantically similar semantic. (b) Model-based augmentation methods generate new data with the same labels by training generative models with large amount of data, labels. These methods often require large amounts of data and target specific data domains. (c) DiffAug attempts to reduce the dependence on external data and prior knowledge through iterative training with encoders and diffusion. Expanding the application areas of unsupervised CL.}
    \label{fig_overview}
\end{figure*}

Data augmentation falls into two main types: \textit{hand-designed methods} and \textit{model-based methods}~\citep{xu2023comprehensive}.
Hand-designed methods require humans to understand the meaning of the data and then change the input features while maintaining or extending that meaning. For example, several methods in visual tasks~(such as color change~\citep{yan2022rgb}, random cropping~\citep{cubuk2020randaugment}, and rotation~\citep{maharana2022review}) and DNA sequence representation tasks~(such as mutations, insertion, and noise~\cite{lee2023evoaug}) are used to aid in model training. 
However, the problem is that the above techniques must be more data-specific. For some data (genes or proteins or others), it isn't easy to understand due to the complexity of its meaning. Consequently, it isn't easy to design a good augmentation strategy. Semantics-independent augmentation methods such as adding noise~\citep{huang2022novel} and random hiding~\citep{theodoris2023transfer} are used, but only sometimes with significant results.
Another issue with hand-designed methods is their inability to subtly alter the semantics of the data. For instance, a minor mutation in a DNA sequence can lead to significant semantic changes, akin to a gene mutation (as illustrated in Figure.~\ref{fig_overview}). As a result, more positive/negative samples are needed to distribute these risks to obtain a stable representation. It is also challenging to train CL models with fewer samples for certain domains where data acquisition is costly, such as biology.

Given the challenges mentioned earlier, model-based methods~(generative models) based on deep learning are used to create better data. In the vision domain, techniques using VAE~\citep{VAE}, GAN~\citep{GAN}, and diffusion models~\citep{DDPM,ImprovedDDPM,Imagen,GLIDE,unCLIP} have been developed to improve model training. For supervised learning, several studies have received attention. \citet{du2023dream} proposed the DREAM-OOD framework, which uses diffusion models to generate photo-realistic outliers from in-distribution data for improved OOD detection. \citet{zhang2023expanding} developed the Guided Imagination Framework (GIF) using generative models like DALL-E2 and Stable Diffusion for dataset expansion, enhancing accuracy in both natural and medical image datasets. The detailed related works are in Appendix.\ref{app_releadwork}. 

However, there are concerns about these methods, especially about their diversity and how well they generalize. Moreover, most of these generative models are trained with supervision or need much external data. This makes them less suitable for areas like DNA sequence and bio-feature data~(in Figure.~\ref{fig_overview}). This leads to an important question: \textit{Is it possible to design a data augmentation framework to enhance unsupervised CL in different domains without requiring expert knowledge or additional data?}

We introduce DiffAug, a novel diffusion mode-based positive data generation technique for unsupervised CL to address the posed problem. DiffAug eliminates the need for training labels. Instead, we employ a semantic estimator to gauge the semantics of the input data, subsequently guiding the augmentation process. At its core, DiffAug operates through two synergistic modules: a semantic encoder and a diffusion generator. Utilizing a soft contrastive loss, the semantic encoder crafts latent representations that act as guiding vectors for the diffusion generator. This generator then methodically produces augmented data in the input space, ensuring varying levels of semantic consistency based on the guiding vectors and specific adjustable hyperparameters. 

\begin{figure*}
    \centering
    \includegraphics[width=0.99\linewidth]{./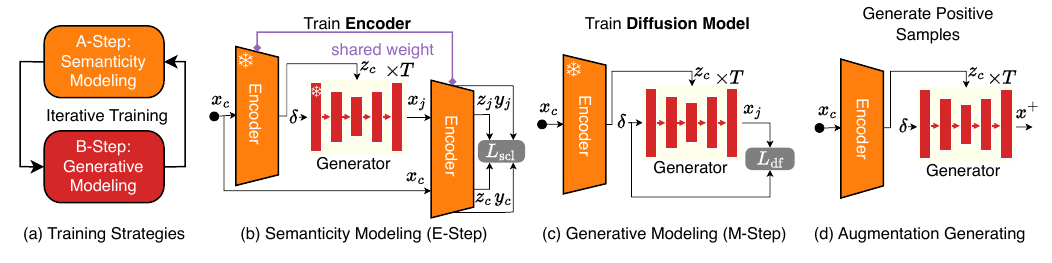}
    \caption{
        \textbf{The DiffAug framework and training strategy.} DiffAug includes a semantic encoder $\text{Enc}(\cdot|\theta)$ and a diffusion generator $\text{Gen}(\cdot|\phi)$. (a) shows how $\text{Enc}(\cdot|\theta)$ and $\text{Gen}(\cdot|\phi)$ are interative trained. (b) and (c) show how to calculate the loss functions. (d) shows how to generate new augmentation data with the trained model.
    }
    \label{fig_method}
\end{figure*}

We demonstrate DiffAug pioneering on DNA sequence and biometric datasets, and pioneering on highly competitive visual datasets. Our findings indicate that the proposed method can produce sensible data augmentations, subsequently enhancing the performance of unsupervised CL that utilizes these augmentations. Notably, {DiffAug} performs superior classification and clustering tasks compared to all benchmark methods. The primary contributions of this paper are:
(a) We introduce DiffAug, a novel unsupervised CL technique with diffusion mode-based positive data generation. DiffAug's data augmentation improves traditional domain-specific hand-designed data augmentation strategy.
(b) {DiffAug} operates independently of external data or manually designed rules. Its versatility allows for integration with various models, encompassing domains like vision or biology studies.
(b) The experimental results show the efficacy of {DiffAug} in enhancing the performance of CL in different tasks. This suggests that DiffAug can generate positive sample data unsupervised, which in turn promotes the development of unsupervised learning techniques.

\section{Methods}
In the context of unsupervised data augmentation, the training dataset providing potential semantic categories is denoted as $\mathcal{D}_{t}=\{\mathbf{x}_i\}_{i=1}^N$, where $N$ is the size of the training set. To boost the training efficiency of unsupervised contrastive learning with positive samples generated by the diffusion model, a novel framework called DiffAug is proposed.

\subsection{Preliminaries of Contrastive Learning}
\textbf{Contrastive Learning. }
Contrastive learning learns visual representation via enforcing the similarity of the positive pairs and enlarging distance of negative pairs. Formally, loss is defined as, $\mathcal{L}_{\mathrm{cl}}=$
\begin{equation}
    -\!\log\!{\mathcal{Q} \left(\mathbf{z}_{i} ,\mathbf{z}_{i}^{+}\right)} \!+\! \log
    [
        {\mathcal{Q} \left(\mathbf{z}_{i}, \mathbf{z}_{i}^{+}\right)\!\!+\!\!\sum_{\!\!\mathbf{z}_{i}^{-} \in V^{-}} \mathcal{Q} \left(\mathbf{z}_{i}, \mathbf{z}_{i}^{-}\right)}
    ]
\end{equation}
where $\mathbf{z}_i$ is the low dimensional embedding $\mathbf{z}_i = \text{Enc}^\mathrm{cl}(\mathbf{x}_i)$, $Q \left(\mathbf{z}_{i} ,\mathbf{z}_{i}^{+}\right)$ indicates the similarity between positive pairs while $Q \left(\mathbf{z}_{i} ,\mathbf{z}_{i}^{-}\right)$ is the similarity between negative pairs.
For the traditional scheme, in the computer vision domain, data augmentation methods such as random cropping~\citep{cubuk2020randaugment} or data Mixup~\citep{Hongyi_mixup_2017} are used to generate new positive data. The negative samples $v_{i}^{-}$ are sampled from negative distribution $V^{-}$.

\textbf{Soft Contrastive Learning.} To address the performance degradation due to view noise in contrastive learning and to accomplish unsupervised learning on smaller scale datasets,~\citet{zang2023boosting} designed soft contrastive learning, which smoothes sharp positive and negative sample pair labels by evaluating the credibility of the sample pairs. Consider the loss form for multiple positive samples and multiple negative samples as,
\begin{equation}
    \begin{aligned}
        \mathcal{L}_\mathrm{scl}(\mathbf{y}_c, \!\mathbf{y}_j, \!\mathbf{z}_c, \!\mathbf{z}_j\!)\! &= \!-\! \!\sum_{j=1}^{\mathcal{B}} \{
              \! {\mathcal{P}\!(\mathbf{y}_c, \mathbf{y}_j)\!} \log\!\left(\!\mathcal{Q}(\mathbf{z}_c, \mathbf{z}_j)\right)\!+\!\\
              &{(1\!-\!\mathcal{P}(\mathbf{y}_c, \mathbf{y}_j))} \log\left(1\!-\!\mathcal{Q}(\mathbf{z}_c, \mathbf{z}_j)\right) 
        \}, \\
        \mathcal{P}(\mathbf{a}, \mathbf{b}) &=
        \left(1 + \mathcal{H}_{ij}\left(e^\beta-1\right)\right) \mathcal{Q}(\mathbf{a}, \mathbf{b}),
    \end{aligned}
    \label{eq_SCL}
\end{equation}
where the $\mathbf{y}_i, \mathbf{z}_i$ are the high dimensional embedding and low dimensional embedding $\mathbf{y_i}, \mathbf{z_i} = \text{Enc}(\mathbf{x_i})$. The $\mathcal{P}(\mathbf{a}, \mathbf{b})$ is soft learning weight and calculated by the positive/negative pair indicator $\mathcal{H}_{cj}$. The hyper-parameter $\beta \in [0,1]$ introduces prior knowledge of data augmentation relationship $\mathcal{H}_{cj}$ into the model training. Details of contrastive and soft contrastive learning are in Appendix.~\ref{app_scl}.

\begin{table*}[ht]
	\centering
	\caption{
		\textbf{Comparison of Linear probing results on DNA sequence datasets.} The compared methods  including SOTA DNA sequence methods~(DNA-BERT~\citep{ji2021dnabert}, NT~\citep{dalla2023nucleotide}, Hyena~\citep{nguyen2023hyenadna}) and contrastive methods with human-designed DNA-augmentation. The `AVE' represents the average performance. The best results are marked in \textbf{bold}.
	}
	\footnotesize
	\begin{tabular}{L{2.4cm}||C{1.1cm}C{1.1cm}C{1.1cm}C{1.3cm}C{1.1cm}C{1.1cm}C{1.3cm}C{1.2cm}|c}
		\toprule
		\multirow{2}{*}{Datasets}                  & \multicolumn{8}{c}{Genomic Benchmarks~\cite{grevsova2023genomic}}                                                                                                                                                                                                                                                                                                                                                   \\
		          & MoEnEn                                   & CoIn                                     & HuWo                                     & HuEnCo                            & HuEnEn                                   & HuEnRe                                   & HuNoPr                                   & HuOcEn                                   & AVE                                      \\\midrule
		CNN               & 69.0                                     & 87.6                                     & 93.0                                     & 58.6                              & 69.5                                     & 93.3                                     & 84.6                                     & 68.0                                     & 76.7                                     \\
		DNA-BERT          & 69.4                                     & 92.3                                     & 96.3                                     & \textbf{74.3}                     & 81.1                                     & 87.7                                     & 85.8                                     & 73.1                                     & 82.5                                     \\
		NT                & 70.2                                     & 90.0                                     & 92.3                                     & 71.5                              & 80.8                                     & 87.9                                     & 84.0                                     & 77.2                                     & 81.7                                     \\
		Hyena             & 80.9                                     & 89.0                                     & 96.4                                     & 73.2                              & 88.1                                     & 88.1                                     & \textbf{94.6}                                     & 79.2                                     & 86.2                                     \\
		\midrule
		SSL+Translocation & 83.8                                     & 88.2                                     & 95.5                                     & 73.8                              & 77.4                                     & 88.2                                     & 84.7                                     & 52.5                                     & 80.5                                     \\
		SSL+RC            & 84.5                                     & 88.3                                     & 95.8                                     & 71.9                              & 82.3                                     & 86.8                                     & 91.0                                     & 74.4                                     & 84.3                                     \\
		SSL+Insertion     & 80.9                                     & 89.8                                     & 96.6                                     & 73.7                              & 83.4                                     & 87.1                                     & 91.8                                     & 77.3                                     & 85.0                                     \\
		SSL+Mixup         & 80.9                                     & 89.4                                     & 96.2                                     & 73.2                              & 85.2                                     & 88.6                                     & 91.6                                     & 77.9                                     & 85.4                                     \\ \midrule
		DiffAug           & \textbf{86.0}\textcolor{dcyan}{{(+1.5)}} & \textbf{94.9}\textcolor{dcyan}{{(+2.6)}} & \textbf{96.8}\textcolor{dcyan}{{(+0.2)}} & {74.0}\textcolor{dcyan}{{(-0.3)}} & \textbf{94.9}\textcolor{dcyan}{{(+6.8)}} & \textbf{91.8}\textcolor{dcyan}{{(+3.2)}} & {94.5}\textcolor{dcyan}{{(-0.1)}} & \textbf{79.9}\textcolor{dcyan}{{(+0.7)}} & \textbf{89.1}\textcolor{dcyan}{{(+2.9)}} \\
		\bottomrule
	\end{tabular}
	\label{tab_exp_DNA}
\end{table*}

\subsection{DiffAug Design Details and Training Strategies}
\textbf{DiffAug Framework.} DiffAug accomplishes the tasks of \textit{positive sample generation} and \textit{data representation} by iterating the two modules over each other~(in Figure.~\ref{fig_method}). DiffAug consists of two main modules, a semantic encoder $\text{Enc}(\cdot|\theta)$ and a diffusion generator $\text{Gen}(\cdot|\phi)$, where $\theta$ and $\phi$ are model parameters.
The $\text{Enc}(\cdot|\theta)$ maps the input data $\mathbf{x}_{i}$ to the discriminative latent space $\mathbf{v}_{i}$, and the generator $\text{Gen}(\cdot|\phi)$ generates new data with a semantic vector $\mathbf{v}_{i}$.
Similar to the Expectation maximization algorithm~\citep{gupta2011theory}, the semantic encoder $\text{Enc}(\cdot|\theta)$ and the diffusion generator $\text{Gen}(\cdot|\phi)$ are trained in turn by two different loss functions (see Figure.~\ref{fig_method}(a) and Figure.~\ref{fig_method}(b)).

\begin{algorithm}[t]
	\caption{The DiffAug Training Algorithm: }
	\footnotesize
	\label{alg:algorithm}
	\textbf{Input}:
	Data: $\mathcal{D}_{t}=\{\textbf{x}_c\}_{i=1}^N$,
	Learning rate: $\eta$,
  E or M State: S,
	Batch size: $B$,
	Network parameters: $\theta, {\phi}$, \\
	\textbf{Output}:
  Updateed Parameters: $\theta, {\phi}$.

	\begin{algorithmic}[1] %
		\WHILE{{$b=0$; $b<[ |\mathcal X| /B]$; $b$++}}
		\STATE $\mathbf{x}_c \sim \mathcal{D}_{t}$;
		\textcolor{blue}{\qquad\qquad\  \# Sample the centering data} \\
		\STATE $\mathbf{y}_c, \mathbf{z}_c \!\leftarrow\! \text{Enc}(\mathbf{x}_c|\theta)$;
		\textcolor{blue}{ \# Generate frozen condition vector} \\
    \IF {S==\text{B-Step}} 
		 \STATE ${L}_{1}\!\leftarrow\! L_{\text{df}}(\mathbf{x}_c, \text{SG}(\mathbf{z}_c)|\phi) $ by Eq.~(\ref{eq_df});  
		 \STATE $\phi \!\leftarrow\!  \phi - \eta \frac{ \partial \mathcal{L}_{\text{1}} }{\partial \phi}$,  \textcolor{blue}{ \ \ \ \ \# Calculate diffusion loss} \\
    \ELSE
     \STATE $ \mathcal{B}_c = \{\mathbf{x}_1, \cdots, \mathbf{x}_\mathcal{B}| \mathbf{x}_j \! \sim  \!\mathcal{D}_{t} \mathbf{\ if \ } \mathcal{H}_{ij} = 0 ; \mathbf{x}_j  \!\sim \! \text{Aug}(\mathbf{x}_c) \mathbf{\ else\ } \} $;
	 \ \ \textcolor{blue}{\# Generate/sample data} \\
		\STATE $\mathcal{Y}=\{\mathbf{y}_1, \cdots, \mathbf{z}_j, \cdots, \mathbf{y}_\mathcal{B}\}$, \STATE $\mathcal{Z}=\{\mathbf{z}_1, \cdots, \mathbf{z}_j, \cdots, \mathbf{z}_\mathcal{B}\},  \mathbf{y}_j, \mathbf{z}_j = \text{Enc}(\mathbf{x}_j|\theta)$
		\STATE ${L}_{2} \!\leftarrow\! L_{\text{scl}}(\mathcal{Y}, \mathcal{Z})$ by Eq.~(\ref{eq_SCL}); \STATE $\theta \!\leftarrow\! \theta - \eta \frac{ \partial \mathcal{L}_{\text{2}} }{\partial \theta}$
		\textcolor{blue}{ \ \ \ \ \ \# Calculate scl loss} \\
    \ENDIF
		\ENDWHILE
	\end{algorithmic}
	\label{alg_1}
\end{algorithm}

\textbf{Semanticity Modeling~(A-Step).} In the semanticity modeling step, given a central data $\mathbf{x}_c$, we generate a background set $\mathcal{B}_{c}=\{\mathbf{x}_1, \cdots, \mathbf{x}_j,\cdots,\mathbf{x}_{\mathcal{N}_b}\}$,
\begin{equation}
    \begin{aligned}
        \left\{
        \begin{aligned}
            \mathbf{x}_j & \sim \mathcal{D}_{t}           & \text{if} \ \ \  \mathcal{H}_{cj}=0 \\
            \mathbf{x}_j & \sim A_\text{ug}(\mathbf{x}_c) & \text{if} \ \ \ \mathcal{H}_{cj}=1
        \end{aligned}
        \right.
        \label{eq_CL}
    \end{aligned}
\end{equation}
where ${N}_b$ is the number of background data points.
The $\mathcal{H}_{cj}=0$ indicates $\mathbf{x}_j$ is sampled from the dataset $\mathcal{D}_{t}$, and $\mathbf{x}_c$ and $\mathbf{x}_j$ are negative pair.
Meanwhile, $\mathcal{H}_{cj}=1$ indicates $\mathbf{x}_c$ and $\mathbf{x}_j$ are positive pair and $\mathbf{x}_j$ is sampled from data augmentation. For details, new positive data are generated by the diffusion model according to DDPM~\citep{DDPM},
\begin{equation}
    \begin{aligned}
        \mathbf{x}_j & = \text{Gen}(\delta, \mathbf{z}_c|\phi^*),
        \mathbf{y}_c, \mathbf{z}_c=\text{Enc}(\mathbf{x_c}|\theta^*), \\
    \end{aligned}
    \label{eq_gen_c}
\end{equation}
where $\text{Gen}(\delta, \mathbf{z}_c|\phi^*)$ is the generation process of the diffusion model, and the generating details are in Eq.~(\ref{eq_gen}).
The $\delta\sim \mathcal{N}(0, \mathbf{1})$ is the random initialized data, and $\mathbf{z}_c$ is a conditional vector.
The $*$ in $\phi^*$ and $\theta^*$ means the parameter is frozen.
To avoid unstable positive samples from untrained generative models, training starts exclusively with traditional data augmentation tools, and then, the data generated by DiffAug is replaced with data generated by DiffAug, with a replacement probability of the hyperparameter $\lambda$, an oversized $\lambda$ introduce toxicity, which we will discuss in Sec.~\ref{lambda}. 
We update the parameter of the semantic encoder with the soft contrastive learning loss,
\begin{equation}
    \begin{aligned}
    &\theta = \theta - \eta \sum_{\mathbf{x}_j\in \mathcal{B}_c} \left\{\mathcal{L}_\mathrm{scl}(\mathbf{y}_c, \mathbf{z}_c, \mathbf{y}_j, \mathbf{z}_j)\right\}, \\ 
    &\text{ where } \mathbf{y}_j, \mathbf{z}_j = \text{Enc}(\mathbf{x}_j|\theta),
    \end{aligned}
\end{equation}
where the $\eta$ is the learning rate, and the $\mathcal{L}_\mathrm{scl}(\mathbf{y}_c, \mathbf{z}_c, \mathbf{y}_j, \mathbf{z}_j)$ is in Eq.~(\ref{eq_SCL}).

\textbf{Generative Modeling~(B-Step).} In the generative modeling step, the conditional diffusion generator $\text{Gen}(\cdot| \phi)$ is trained by the vanilla diffusion loss $\mathcal{L}_\text{df}(\mathbf{x}_c, \mathbf{z}_c| \phi)$ \citep{DDPM},
\begin{equation}
    \phi\! =\! \phi \!- \!\eta \sum _{t=1} ^{T}
    \!\left\{\!
    \left\|
    \delta-g_\phi
    \left(
    \sqrt{\bar{\alpha}_t} \widetilde{\mathbf{x}}_c^{t} \!+\! \sqrt{1-\bar{\alpha}_t}, t, \mathbf{z}_c
    \right)
    \right\|_{2}^{2}
    \!\right\}\!, \\
    \label{eq_df}
\end{equation}
where the conditional vector $\mathbf{z}_c$ is generated from the semantic encoder in Eq.(\ref{eq_gen_c}). The $g_\phi(\cdot)$ is the conditional diffusion neural network. The $\alpha_t$ is the noise parameter in the diffusion process, and $\bar{\alpha}_t = 1-\alpha_t$. The $\widetilde{\mathbf{x}}_c^{t}$ is the intermediate data in the diffusion process, and the $\widetilde{\mathbf{x}}_c^{0} = \mathbf{x}_c$. $T$ is the time step of the generation process. When $g_\phi(\cdot)$ is trained, the detailed generating process is, 
\begin{equation}
    \begin{aligned}
        \text{Gen}(\delta, \mathbf{z}_c|\phi^*)\!&=\!\left\{\!
        \widetilde{\mathbf{x}}^{0} \ | \ \widetilde{\mathbf{x}}^{t-1} \!=\!
        \frac{1}{\sqrt{\alpha_t}}
        \left(
            \widetilde{\mathbf{x}}^{t} \!-\!
            \hat{\alpha}
            \right) \!+\! \sigma_t \mathcal{N}(0,1),
            \!\right\}\!,\\
        \hat{\alpha} &= \frac{1-\alpha_t}{\sqrt{1\!-\!\bar{\alpha}_t}} g_\phi (\widetilde{\mathbf{x}}^{t}, t, \mathbf{z}^{*}_c),
    \end{aligned}
    \label{eq_gen}
\end{equation}
where $t\in\{T,\!\cdots\!,\! 1\}$, the $g_\phi(\cdot)$ is a neural network approximator intended to predict $\delta$ with $\widetilde{\mathbf{x}}$ and the condition vector $\mathbf{z}_c^*$.

\textbf{Augmentation Generation.} Given the trained semantic encoder $\text{Enc}(\cdot|\theta)$ and diffusion
generator $D(\cdot)$, and DiffAug generate new augmented data $\mathbf{x}_{i}^{+}$ from any input data $\mathbf{x}_{i}$.
\begin{equation}
    \mathbf{x}_{i}^{+} = \text{Gen}(\delta|\mathbf{z}_{i}), \ \ \mathbf{y}_{i}, \mathbf{z}_{i} = \text{Enc}(\mathbf{x}_{i}).
\end{equation}
Meanwhile, DiffAug's semantic encoder can be seen as a feature extractor. It is considered to have good discriminative performance because it is trained simultaneously as the diffusion
generator. 

\section{Results}
\begin{table*}[t]
	\centering
	\small
	\caption{\textbf{Comparison of the Linear probing results on Bio-feature dataset.}
		dimensional reduction~(DR) methods and contrastive learning methods are list in the table. The DR methods are widely used on Bio-feature analysis including TopoAE~\citep{moor_topological_2019}, PaCMAP~\citep{Yingfan_pacmap}, and hNNE~\citep{sarfraz2022hierarchical}.
		CL methods wtih mixup augmentation including Simclr~\citep{chen_simple_2020}, BYOL~\citep{grill_bootstrap_2020}, MoCo~\citep{he_momentum_2020}, and DLME~\citep{zang2022dlme}.
	}
	\begin{tabular}{@{}cc||cccccc|cccc}
		\toprule
		             & Method Type & {GA1457}                                   & {SAM561}                                   & {MC1374}                                   & {HCL500}                                   & {WARPARIOP}                                & {PROT579}                              & AVE                                        \\                                          \midrule
		TopoAE       & DR          & 74.6                                       & 72.4                                       & 61.3                                       & 56.0                                       & 73.0                                       & 88.3                                   & 70.9                                       \\
		PaCMAP       & DR          & 85.3                                       & 83.7                                       & 61.3                                       & 36.2                                       & 76.9                                       & 87.2                                   & 71.7                                       \\
		hNNE         & DR          & 77.4                                       & 83.8                                       & 62.3                                       & 62.2                                       & 72.5                                       & 82.4                                   & 73.4                                       \\ \midrule
		Simclr+Mixup & CL          & 84.8                                       & 82.4                                       & 62.3                                       & 61.7                                       & 74.3                                       & 74.7                                   & 73.3                                       \\
		BYOL+Mixup   & CL          & 82.8                                       & 73.3                                       & 60.9                                       & 61.3                                       & 72.6                                       & 70.5                                   & 70.2                                       \\
		MoCo+Mixup   & CL          & 84.2                                       & 83.1                                       & 70.2                                       & 61.7                                       & 82.8                                       & 84.2                                   & 77.7                                       \\
		DLME+Mixup   & CL          & 85.7                                       & 83.6                                       & 71.4                                       & 62.3                                       & 83.5                                       & 84.5                                   & 78.5                                       \\ \midrule
		DiffAug      & CL          & {\textbf{92.7}\textcolor{dcyan}{{(+7.0)}}} & {\textbf{89.3}\textcolor{dcyan}{{(+5.5)}}} & {\textbf{71.8}\textcolor{dcyan}{{(+0.4)}}} & {\textbf{64.7}\textcolor{dcyan}{{(+2.4)}}} & {\textbf{84.8}\textcolor{dcyan}{{(+1.3)}}} & {\bf{91.2}}\textcolor{dcyan}{{(+2.9)}} & {\textbf{82.4}\textcolor{dcyan}{{(+3.9)}}} \\
		\bottomrule
	\end{tabular}
	\label{tab_exp_scrna}
\end{table*}

\begin{table}[t]
	\centering
	\footnotesize
	\caption{\textbf{Comparison of Linear probing results on vision dataset.}
		The SimC.+Mix. and Mo.V2+Mix. are SimCLR and MoCoV2 with Mixup data augmentation which processed by \citet{zhang2022m}. The SimC./Mo.V2+VAE/GAN means SimCLR/MoCoV2 with VAE/GAN generative model as data augmentation.}
	\begin{tabular}{@{}C{1.8cm}||C{1.1cm}C{1.1cm}C{1.1cm}C{1.1cm}}
		\toprule
		Datasets    & {CF10}                                     & {CF100}                                    & {STL10}                                    & {TINet}                                    \\
		\midrule
		SimCLR      & {89.6}                                     & {60.3}                                     & {89.0}                                     & {45.2}                                     \\
		Mo.V2       & {86.7}                                     & {56.1}                                     & {89.1}                                     & {47.1}                                     \\
		BYOL        & {92.0}                                     & {62.7}                                     & {91.8}                                     & {46.1}                                     \\
		SimSiam     & {91.6}                                     & {64.7}                                     & {89.4}                                     & {43.0}                                     \\
		DINO        & {91.8}                                     & {67.4}                                     & {91.7}                                     & {44.2}                                     \\
		\midrule
		SimC.+Mixup & {90.9}                                     & {62.9}                                     & {89.6}                                     & {---}                                      \\
		Mo.V2+Mixup & {91.5}                                     & {62.7}                                     & {90.1}                                     & {---}                                      \\ \midrule
		SimC.+VAE   & {89.6}                                     & {64.2}                                     & {91.7}                                     & {46.0}                                     \\
		Mo.V2+VAE   & {89.3}                                     & {65.9}                                     & {91.2}                                     & {43.3}                                     \\
		SimC.+GAN   & {90.0}                                     & {64.3}                                     & {89.9}                                     & {44.6}                                     \\
		Mo.V2+GAN   & {91.1}                                     & {62.9}                                     & {91.2}                                     & {43.6}                                     \\ \midrule
		DiffAug     & {\textbf{93.4}\textcolor{dcyan}{{(+1.6)}}} & {\textbf{69.9}\textcolor{dcyan}{{(+2.5)}}} & {\textbf{92.5}\textcolor{dcyan}{{(+0.8)}}} & {\textbf{49.7}\textcolor{dcyan}{{(+2.1)}}} \\ \bottomrule
	\end{tabular}
	\label{tab_exp_image}
\end{table}

We conduct experiments on various datasets, including DNA sequences, vision, and bio-feature datasets. We aim to demonstrate that Diffaug can operate effectively and facilitate improvements across diverse domains.

\subsection{Comparations on DNA Sequence Datasets}
\label{sub_dnadeq}
First, we demonstrate the efficacy of DiffAug in improving DNA sequence representation and classification. DNA sequence representation is challenging for contrastive learning because one cannot easily design data augmentation manually by visualizing and understanding the data. \citet{lee2023evoaug} explores how various natural genetic alterations can enhance model training performance.

\textbf{Test Protocols.} 
Our experiments utilize the Genomic Benchmarks~\cite{grevsova2023genomic}, encompassing datasets that target regulatory elements (such as promoters, enhancers, and open chromatin regions) from three model organisms: humans, mice, and roundworms\footnote{https://github.com/ML-Bioinfo-CEITEC/genomic\_benchmarks.}. We adopted a methodology akin to that of Hyena-DNA~\cite{nguyen2023hyenadna} for evaluating linear-test performance. 
To mitigate the influence of the pre-training dataset, we exclusively employed the training data from Genomic Benchmarks for self-supervised pre-training and fine-tuning, followed by an evaluation of the test dataset. The comparison includes various methods such as CNN (as per Genomic Benchmarks), DNA-BERT~\cite{ji2021dnabert}, NT~\cite{dalla2023nucleotide}, and Hyena~\cite{nguyen2023hyenadna}. Both DiffAug and `SSL+' leverage Hyena-tiny\footnote{https://huggingface.co/LongSafari/hyenadna-tiny-1k-seqlen} backbone, replacing Hyena's pre-training approach with a unique pre-training methodology. 
`SSL+' means the model is pre-trained with SimCLR~\cite{chen_simple_2020} with the augmentation of natural DNA augmentation strategies in \cite{lee2023evoaug}. Further details on the dataset are provided in Appendix~\ref{app_dna_exp}.

\textbf{Analysis.}
DiffAug outperforms competing methods in eight evaluations across four datasets, achieving performance improvements ranging up to 6.8\%. Notably, DiffAug demonstrates several significant benefits, particularly in classification metrics: (a) DiffAug enhances the sequence model's performance on classification accuracy, surpassing traditional DNA augmentation approaches. (b) By learning distributional knowledge from the training data, DiffAug facilitates data augmentation with minimal human intervention, potentially enabling the generation of more stable enhanced samples.

\begin{table*}[t]
	\caption{\textbf{Ablation study of the semantic encoder includes DiffAug's encoder is necessary and can efficiently generate conditional vectors.} Linear-tests performance of different ablation setups on on vision dataset and biological dataset. }
	\centering
	\small
	\begin{tabular}{l||C{1.00cm}C{1.00cm}C{1.00cm}C{1.00cm}|C{1.00cm}C{1.00cm}C{1.00cm}C{1.00cm}C{1.00cm}}
		\toprule
		\multirow{2}{*}{Datasets}                                                & \multicolumn{4}{c|}{Vision Datasets} & \multicolumn{4}{c}{Bio-feature Datasets}                                                                                                                   \\
		                                                                         & \multirow{1}{*}{CF10}                & \multirow{1}{*}{CF100}                   & \multirow{1}{*}{STL10} & \multirow{1}{*}{TINet} & GA1457        & SAM561        & MC1374        & HCL500        \\ \midrule
		\textbf{A1.} $\text{Gen}(\cdot)$ + Sup. Condition                        & \textbf{93.4}                        & \textbf{70.9}                            & \textbf{92.9}          & {45.9}                 & {92.5}        & \textbf{89.6} & {71.1}        & {63.9}        \\
		\textbf{A2.} $\text{Gen}(\cdot)$ + Rand. Condition                       & {34.2}                               & {10.4}                                   & {30.1}                 & {7.3}                  & {10.5}        & {16.9}        & {13.9}        & {10.0}        \\
		\textbf{A3.} $\text{Gen}(\cdot)$ + $\text{Enc}(\cdot|\theta)$  (DiffAug) & \textbf{93.4}                        & {69.9}                                   & {92.5}                 & \textbf{49.7}          & \textbf{92.7} & {88.3}        & \textbf{71.8} & \textbf{64.7} \\
		\toprule
	\end{tabular}
	\label{tb_ablationa}
\end{table*}

\begin{figure*}[t]
    \centering
    \includegraphics[width=0.99\linewidth]{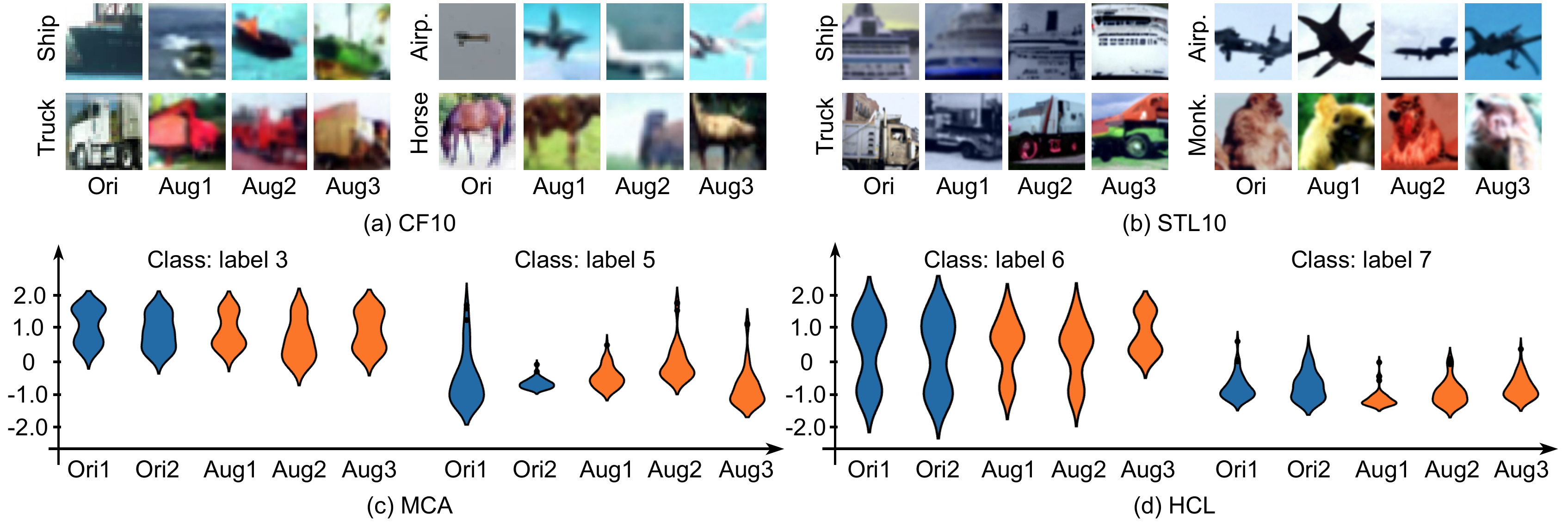}
    \caption{
        \textbf{The display of original and generated images illustrates that DiffAug generates semantically similar augmented data.} The `Ori' means original data and Aug1, Aug2 and Aug3 are augmentated data. For bio-feature data, we use violin plots to plot the distribution of features.
    }
    \label{fig_vis_image}
\end{figure*}

\subsection{Comparations on Bio-feature Datasets}\label{sub_easy}
Next, we benchmark DiffAug against SOTA unsupervised contrastive learning~(CL) methods and traditional dimensional reduction~(DR) methods in bio-feature datasets. Unlike DNA sequence data, bio-feature datasets are the format for most proteomics\cite{suhre2021genetics}, genomics~\cite{bustamante2011genomics}, and transcriptomics\cite{aldridge2020single} data. The dataset contains an equal-length vector per sample, and each element in the vector represents a gene, protein abundance, or bio-indicator. The data of training time consumption is in the Table~\ref{app_param_time_bio}.

\begin{table*}[t]
	\caption{\textbf{Ablation study of scl loss function and training strategy.} The classifier accuracy of each setting is displayed in this table. Soft contrastive learning is improved with typical contrast learning, and AB-Step training is more stable.}
	\centering
	\small
	\begin{tabular}{l||C{1.1cm}C{1.1cm}C{1.1cm}C{1.1cm}|C{1.1cm}C{1.1cm}C{1.1cm}C{1.1cm}}
		\toprule
		\multirow{2}{*}{Datasets}                         & \multicolumn{4}{c|}{Vision Datasets} & \multicolumn{4}{c}{Bio-feature Datasets}                                                                                                                   \\
		                                                  & \multirow{1}{*}{CF10}                & \multirow{1}{*}{CF100}                   & \multirow{1}{*}{STL10} & \multirow{1}{*}{TINet} & GA1457        & SAM561        & MC1374        & HCL500        \\ \midrule
		\textbf{B1.} SimCLR                               & {89.6}                               & {60.3}                                   & {89.0}                 & {45.2}                 & {84.8}        & {82.4}        & {62.3}        & {61.7}        \\
		\textbf{B2.} DiffAug w/o $\mathcal{L}_\text{df}$  & {91.3}                               & {66.1}                                   & {90.1}                 & {44.9}                 & {89.1}        & {82.1}        & {59.3}        & {62.3}        \\
		\textbf{B3.} DiffAug w/o $\mathcal{L}_\text{scl}$ & {92.7}                               & {68.4}                                   & {90.9}                 & {45.1}                 & {89.2}        & {82.4}        & {69.2}        & {61.3}        \\ \midrule
		\textbf{B4.} DiffAug Syn. Training                & {92.9}                               & {69.7}                                   & \textbf{92.7}          & {45.3}                 & {90.1}        & \textbf{89.6} & {68.1}        & {62.3}        \\
		\textbf{B5.} DiffAug AB Training                  & \textbf{93.4}                        & \textbf{69.9}                            & {92.5}                 & \textbf{49.7}          & \textbf{92.7} & {88.3}        & \textbf{71.8} & \textbf{64.7} \\
		\toprule
	\end{tabular}
	\label{tb_ablationb}
\end{table*}

\begin{figure*}
    \centering
    \includegraphics[width=0.99\linewidth]{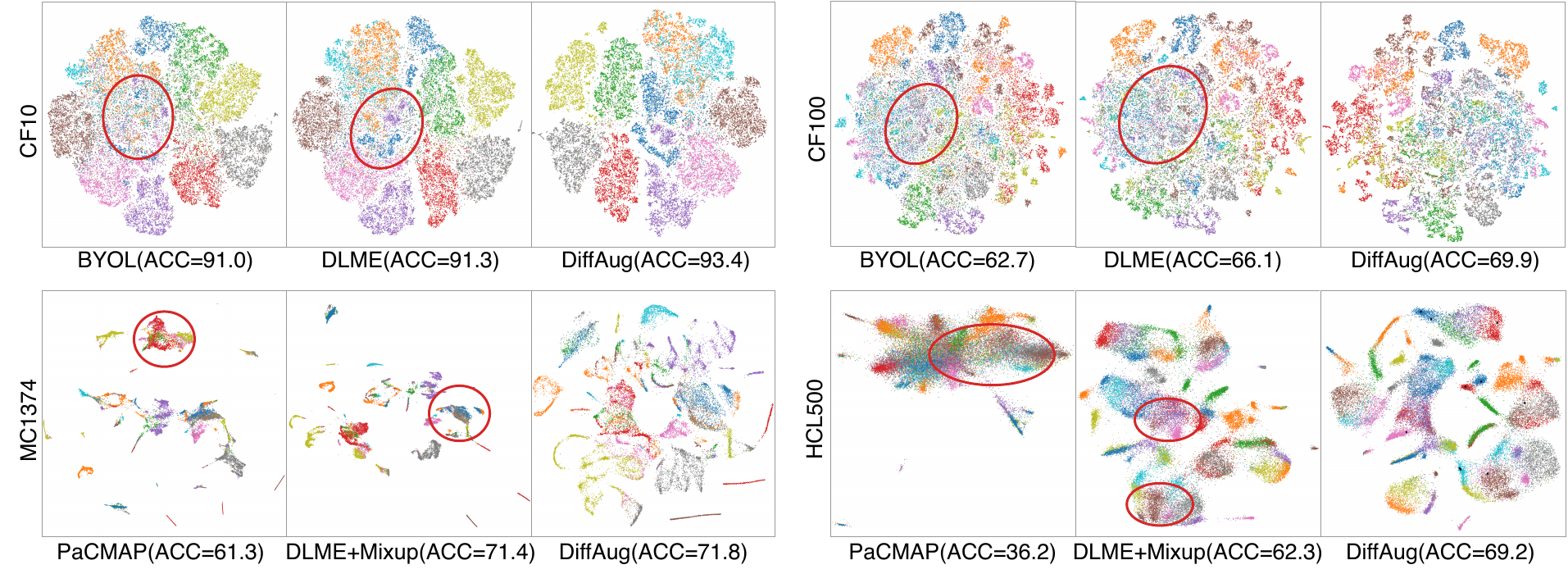}
    \caption{\textbf{The scatter visualization of representation indicates DiffAug's encoder learns cleaner embedding.} The colors represent different categories; there are 100 categories in CF100; we used the superclasses label provided by \cite{deng2021flattening}.}
    \label{fig_vis_main}
\end{figure*}

\textbf{Test protocols.}
Our experiments are conducted using a variety of bio-feature datasets, namely GA1457 \citep{rouillard2016harmonizome}, SAM~\citep{weber2016comparison}, MC1374~\citep{han2018mapping}, HCL500~\citep{han2020construction}, PROT579~\cite{sun2022artificial}, and WARPARIOP. To evaluate the effectiveness of our DiffAug, we adopted a linear Support Vector Machine (SVM) performance assessment akin to the methodologies described in \citet{Yingfan_pacmap} and \citet{sarfraz2022hierarchical}. In this assessment, dataset embeddings are split, allocating 90\% for training purposes and the remaining 10\% for testing. The comparative results are shown in Table~\ref{tab_exp_scrna}. Further details regarding this experimental setup can be found in the Appendix \ref{app_bio_exp}.
The training regimen for DiffAug is structured as follows: an initial A-Step of 330 epochs, followed by an B-Step of 330 epochs and concluding with a final A-Step of 340 epochs. Comprehensive training specifics, along with the evolution of accuracy throughout training, are depicted in Figure \ref{app_train_acc}.

\textbf{Analysis.}
{DiffAug} consistently surpasses all other methods across eight evaluations spanning four datasets, registering a performance enhancement between 0.4\% and 7.0\% over its counterparts. 
(a) It is worth noting that the benefits of DiffAug are not limited to DNA sequence data. It also excels in areas such as bio-feature and has broader applications in traditional bioanalysis.
(b) Data processed through {DiffAug} exhibits reduced overlap among distinct groups, facilitating enhanced classification. This suggests {DiffAug} delineates more explicit boundaries between data categories, culminating in more precise outcomes. Corresponding evidence is demonstrated in Figure \ref{fig_vis_main}.
(c) Experiments with DNA sequence and bio-feature data have demonstrated that DiffAug is versatile and an important complement to other unsupervised learning techniques. Searching for effective data augmentation strategies in biology has been difficult.

\subsection{Comparations on {Vision Datasets}}
\label{sub_cv}
Next, we benchmark DiffAug against the SOTA unsupervised comparative learning (CL) method on a visual dataset. This field is much more active and has seen the emergence of excellent human-designed data augmentation methods.
We focus on data augmentation techniques that can be used for unsupervised CL. Therefore, the comparison does not include some labeling-based methods~\cite {nichol2021glide,he2022synthetic,trabucco2023effective,zhang2023expanding}.

\begin{figure}[t]
    \centering
    \includegraphics[width=0.99\linewidth]{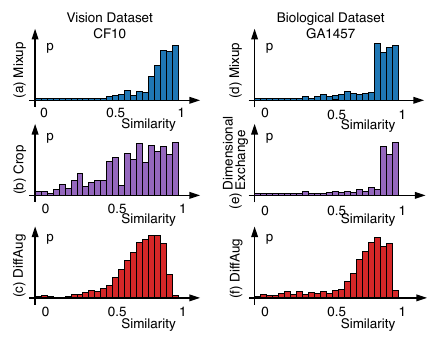}
    \caption{\textbf{Hist plot of the cosine similarity between original data and the augmentation data in latent space indicate that DiffAug generates semantically smooth augmentations.} For the image data, we compared similar mixups with random cropping. For bio-feature datasets, we compared same-label Mixup and random dimension swapping.}
    \label{fig_diff_simi_hist}
\end{figure}

\textbf{Test Protocols.} 
Experiments are performed on CIFAR-10~[CF10] and CIFAR-100~[CF100]~\citep{krizhevsky2009learning}, STL10~\citep{coates2011analysis}, TinyImageNet~[TINet]~\citep{le2015tiny} dataset. Notably, we did not use a larger dataset in our experiments. It is because the proposed method aims to efficiently utilize data to train models when data is limited (or expensive). Simple augmentation techniques can train models without fully adequate and well-sampled data.
We followed a procedure similar to SimCLR~\citep{chen_simple_2020} for the Linear-test performance assessment. The baseline of SimCLR~\citep{chen_simple_2020}, BYOL~\citep{grill_bootstrap_2020}, MoCo v2~\citep{he2020momentum}, and SimSiam~\citep{chen2021exploring} is from \citet{peng2022crafting}.
The results of SimCLR and MoCoV2 with Mixup data augmentation~(SimC.+Mix. and Mo.V2+Mix.) are from  \citet{zhang2022m}. Since we did not find the corresponding GAN and VAE as a baseline for data augmentation, we tested the corresponding results ourselves. For DiffAug, its semantic encoder served as the CL backbone, trained using DiffAug-augmented images. Comparative results are shown in Table~\ref{tab_exp_image}. Details of the experimental setup are in Appendix~\ref{app_vision_exp}. The training strategy of DiffAug is A-Step: 200 epochs $\to$ B-Step: 400 epoch $\to$ A-Step: 800 epoch. The data of training time consumption is in the Table~\ref{app_param_time_vis}.

\textbf{Analysis.} 
From Table~\ref{tab_exp_image}, it is evident that DiffAug consistently outperforms SOTA methods across all datasets. It surpasses other techniques by at least 0.8\% in five out of the four projects. This showcases the effectiveness of DiffAug's data augmentation.
(a) \textit{Beyond hand-designed augmentation methods.} DiffAug's versatility indicates that its approach is on par with, or even better than, traditional hand-crafted methods. The encoder in DiffAug produces robust features.
(b) \textit{Beyond Mixup improved CL methods.} DiffAug outperforms the Mixup improved CL method of typical contrast learning methods, and additionally, models trained using DiffAug-generated data and contrast learning methods bring some improvement.
(c) For datasets with many classes, like CF100 and TINet, DiffAug's encoder might only sometimes capture some local detail. Still, augmented data is crucial in guiding CL to produce better results.

\subsection{DiffAug Effectiveness Analysis}

\textbf{Effectiveness analysis of diffusion generator.} 
The diffusion module generates new positive data by inputting the provided condition vector. To demonstrate that our DiffAug works appropriately, we show the generation results for both the image and bio-feature datasets~(in Figure.~\ref{fig_vis_image}). A more detailed implementation and more results are in the Appendix~\ref{app_vision_exp} and Appendix~\ref{app_bio_exp}. We can observe that the generated data retains semantic similarity to the original data. For example, the objects described in the image data are consistent, while the gene distribution is also consistent. At the same time, the generated data is not simply copied but varied without changing the semantic information.

In addition, we computed the cos-similarity of the original augmented sample in latent space to explore further the semantic differences between the newly generated and original data. As depicted in Figure.~\ref{fig_diff_simi_hist}, DiffAug's similarity distribution is smoother and broader. In comparison, Mixup tends to produce augmentations that are very similar semantically, while methods like cropping might introduce data with semantically distinct noise samples. In addition, we computed the cos-similarity of the original augmented sample in latent space to explore further the semantic differences between the newly generated and original data. As depicted in Figure.~\ref{fig_diff_simi_hist}, DiffAug's similarity distribution is smoother and broader. In comparison, Mixup tends to produce augmentations that are very similar semantically, while methods like cropping might introduce data with semantically distinct noise samples.

\textbf{Effectiveness analysis of semantic encoder.}
Next, we confirm that the semantic encoder of DiffAug works well by visualizing the representation of DiffAug and baseline methods (in Figure.~\ref{fig_vis_main}). The t-SNE~\citep{van2008visualizing} is used to analyze the BYOL, DLME, and DiffAug embedding on CF10, CF100, MC1374, and HCL500 datasets. The results show that DiffAug's encoder learns cleaner embedding than the baseline methods in Figure.~\ref{fig_vis_main}, DiffAug E means the first A-Step's results of the DiffAug. By comparing DiffAug E and DiffAug, we observe that the augmented data further improves the embedding quality, significantly enhancing the depiction of local structures. The same conclusion is shown in Figure.~\ref{app_train_acc}.

\subsection{Ablation Study and Effectiveness of Component}

\textbf{Ablation study of the semantic encoder.}
In the ablation study presented in Table.~\ref{tb_ablationa}, we consider three configurations:
{A1} and {A2} confirm the significance of DiffAug's semantic encoder by ablating it in two ways. {A1} directly uses supervised one hot label as the conditional, bypassing the condition vectors generated by the unsupervised neural network. {A2} employs random conditional vectors instead of those the encoder produces.
{A3} means the proposed DiffAug method. The results from these experiments can be found in Table~\ref{tb_ablationa}.
We observe that the average performance of {A1} is highest due to the access to the label. And not accessing the label at all brings a huge performance drop.
The results in {A3} illustrate that DiffAug's performance is comparable to the fully supervised condition, demonstrating its ability to model supervised annotation within an unsupervised framework.

\textbf{Ablation study of training strategy and scl loss function.}
For Ablation in Table.~\ref{tb_ablationb},
{B1} means that the model is trained by SimCLR~\citep{chen_simple_2020}. {B2} omits the diffusion loss and trains the encoder with only the soft CL loss. {B3} omits the soft CL loss and trains the encoder with InfoNCE loss. {B4} and {B5} talk about the training strategy of DiffAug. B4 denotes training the model by integrating two loss functions, i.e., mixing A-Step and B-Step to update the parameters of both networks simultaneously through a single forward propagation.B5 denotes the default training strategy, which trains the model by alternating the two loss functions. The results from these experiments can be found in Table~\ref{tb_ablationb}.
First, we observe that either replacing the scl loss or replacing the diff model~(B2 or B3) brings about performance degradation, which implies that the two modules of DiffAug work in conjunction with each other.
Second, we observe that on some datasets, the performance of the two training strategies~(B4 and B5) is comparable, but on others, the EM method demonstrates higher stability. We attribute this to the fact that the difficulty of diffusion model training varies from data to data, and simultaneous training may result in the two modules being unable to match at all times, bringing about instability in training. However, the E-M training approach avoids this problem.

\subsection{Hyperparametric Analysis and Toxicity Analysis} \label{lambda}

Finally, we investigate the performance improvement and potential toxicity of the DiffAug method through hyperparametric analysis. The hyperparameter $\lambda$ determines how often the model generated by DiffAug affects the training of the semantic encoder. Introducing the augmentation data~($\lambda=0$) brings the method back to traditional CL methods, while too much~($\lambda=1$) will lead to the encoder crashing. To demonstrate this, we tested the model performance of different $\lambda$ counterparts on two visual datasets (CF10, CF100) and two bio-feature datasets (SAM561 and MC1374).
As shown in Figure.~\ref{fig_vis_image}, the change in performance brought about by $\lambda$ is consistent across datasets. Specifically, setting $\lambda=0.1$ or $\lambda=0.15$ provides the most significant gain. We believe that $\lambda=0.1$ may be a suitable default setting for most datasets.

\section{Conclusion}
In summary, we presented DiffAug, an innovative contrastive learning framework that leverages diffusion-based augmentation to enhance the robustness and generalization of unsupervised learning. Unlike many existing methods, DiffAug operates independently of prior knowledge or external labels, positioning it as a versatile augmentation tool with notable performance in vision and life sciences. Our tests reveal that DiffAug consistently boosts classification and clustering accuracy across multiple datasets.

\section*{Acknowledgements}
This work was supported by the National Science and Technology Major Project (No.2022ZD0115101), the National Natural Science Foundation of China Project (No. U21A20427), and Project (No.WU2022A009) from the Center of Synthetic Biology and Integrated Bioengineering of Westlake University. We thank the Westlake University HPC Center for providing computational resources. This work was done when Zelin Zang was intern at DAMO Academy. This work was supported by Alibaba Group through Alibaba Research Intern Program.

This research is supported by the National Research Foundation, Singapore under its AI Singapore Programme (AISG Award No: AISG2-PhD-2021-08-008). Yang You's research group is being sponsored by NUS startup grant (Presidential Young Professorship), Singapore MOE Tier-1 grant, ByteDance grant, ARCTIC grant, SMI grant (WBS number: A-8001104-00-00),  Alibaba grant, and Google grant for TPU usage.

\section*{Impact Statements}
The introduction of DiffAug, our novel approach to unsupervised contrastive learning, presents an opportunity to reflect on the ethical considerations inherent in the advancement of machine learning technologies. A key aspect of DiffAug is its reliance on a conditional diffusion model for the generation of new, positive data samples. This process, rooted in unsupervised learning, underscores our commitment to minimizing human intervention and, by extension, the potential for human bias in the initial stages of data handling. By automating the generation of training data, we aim to reduce the subjective influences that may arise from hand-designed methods, thus promoting a more objective and equitable development of machine learning models. Furthermore, the iterative training process of the semantic encoder and diffusion model in DiffAug is designed to ensure that the generated data remains true to the original dataset's semantic integrity, thereby upholding the principles of fairness and transparency in AI. 

The deployment of DiffAug is poised to have a transformative effect on a wide array of sectors, leveraging the untapped potential of unsupervised learning to interpret complex datasets across disciplines. In healthcare, for example, DiffAug's ability to enhance representation learning without extensive labeled datasets could revolutionize the early detection and diagnosis of diseases, making healthcare more accessible and efficient. The fundamental shift towards unsupervised learning, exemplified by DiffAug, also heralds a new era of innovation in which the reliance on large, labeled datasets is significantly reduced, thereby democratizing access to advanced machine learning technologies.

\bibliography{MyLibrary,paper}
\bibliographystyle{icml2024}

\newpage
\appendix
\onecolumn

\begin{center}
	{\Large
		Appendix}
	\vspace{5mm}
\end{center}

\tableofcontents

\section{Appendix: Related Works} \label{app_releadwork}
\paragraph{Generative Models } Generative models have been the subject of growing interest and rapid advancement. Earlier methods, including VAEs \citep{VAE} and GANs \citep{GAN}, showed initial promise generating realistic images, and were scaled up in terms of resolution and sample quality \citep{BigGAN, VQ-VAE-2}. Despite the power of these methods, many recent successes in photorealistic image generation were the result of diffusion models \citep{DDPM,ImprovedDDPM,Imagen,GLIDE,unCLIP}. Diffusion models have been shown to generate higher-quality samples compared to their GAN counterparts \citep{Dhariwal21}, and developments like classifier free guidance \citep{ClassifierFreeGuidance} have made text-to-image generation possible. Recent emphasis has been on training these models with internet-scale datasets like LAION-5B \citep{LAION5B}. Generative models trained at internet-scale \citep{StableDiffusion,Imagen,GLIDE,unCLIP} have unlocked several application areas where photorealistic generation is crucial.

\paragraph{Synthetic Image Data Generation } Training neural networks on synthetic data from generative models was popularized using GANs \citep{DAGAN,Tran17,Zheng17}. Various applications for synthetic data generated from GANs have been studied, including representation learning \citep{Jahanian22}, inverse graphics \citep{StyleGANRender}, semantic segmentation \citep{DatasetGAN}, and training classifiers \citep{Tanaka19,Dat19,Yamaguchi20,Besnier20,Xiong_2020_CVPR,Sajila2021,Haque21}. More recently, synthetic data from diffusion models has also been studied in a few-shot setting \citep{RealGuidance}. These works use generative models that have likely seen images of target classes and, to the best of our knowledge, we present the first analysis for synthetic data on previously unseen concepts. 
\cite{du2023dream} proposed the DREAM-OOD framework, which uses diffusion models to generate photo-realistic outliers from in-distribution data for improved OOD detection. By learning a text-conditioned latent space, it visualizes imagined outliers directly in pixel space, showing promising results in empirical studies. \cite{zhang2023expanding} developed the Guided Imagination Framework (GIF) using generative models like DALL-E2 and Stable Diffusion for dataset expansion, enhancing accuracy in both natural and medical image datasets.

\paragraph{Synthetic Biology Data Generation } The realm of synthetic biology has witnessed a surge in the utilization of data-driven approaches, particularly with the advent of advanced computational models. The generation of synthetic biological data has been instrumental in predicting protein structures \citep{mcgibbon2023scorch}. The use of Generative Adversarial Networks (GANs) has also found its way into this domain, aiding in the c
reation of synthetic DNA sequences \citep{zheng2023maskdna,li2022phiaf,han2019progan} and simulating cell behaviors \citep{botton2022data}. Furthermore, the integration of machine learning with synthetic biology has paved the way for innovative solutions in drug discovery \citep{blanco2023role,mcgibbon2023scorch}. Unlike the synthetic image data generation, where models have often seen images of target classes, synthetic biology data generation often grapples with the challenge of generating data for entirely novel biological entities. This presents a unique set of challenges and opportunities, pushing the boundaries of what synthetic data can achieve in the realm of biology.

\section{Appendix: Details of Contrastive Learning and Soft Contrastive Learning} \label{app_scl}
\subsection{The t-kernel similarity in soft contrastive learning}
To map the high-dimensional embedding vector to a probability value, a kernel function $\mathcal{S}(\cdot)$ is used. In this paper, we use the t-distribution kernel function $\mathcal{S}^\nu(\cdot)$ because it exposes the degrees of freedom and allows us to adjust the closeness of the distribution in the dimensionality reduction mapping~\citep{BolianLi2021TrustworthyLC}. The t-distribution kernel function is defined as follows,
\begin{equation}
	\begin{aligned}
		 & \mathcal{S}(\mathbf{z}_i, \mathbf{z}_j) =
		{\Gamma\left((\nu +1)/{2}\right)}
		\left(
		1+{\|z_i- z_j\|_{2}^{2}}/{\nu}
		\right)^{-\frac{\nu +1}{2}} /{\sqrt{\nu  \pi} \Gamma\left({\nu }/{2}\right)},
	\end{aligned}
	\label{eq_tkernal}
\end{equation}
where $\Gamma(\cdot)$ is the Gamma function. The degrees of freedom $\nu$ control the shape of the kernel function. The different degrees of freedom~($\nu^y, \nu^z$) is used in $\mathcal{R}^y$ and $\mathcal{R}^z$ for the dimensional reduction mapping.

\subsection{Why Soft Contrastive Learning is a softened version of Contrastive Learning}

\begin{lemma}
	Let $\mathcal{L}_{\mathrm{cl}}=-\log {\mathcal{Q} \left(\mathbf{z}_{i} ,\mathbf{z}_{i}^{+}\right)} + \log\left[{\mathcal{Q} \left(\mathbf{z}_{i}, \mathbf{z}_{i}^{+}\right)+\sum_{\mathbf{z}_{i}^{-} \in V^{-}} \mathcal{Q} \left(\mathbf{z}_{i}, \mathbf{z}_{i}^{-}\right)}\right] $
	and
	$\mathcal{L}_{\mathrm{cl}}^p = -
		\sum_{j=1}^{N_K+1} \left\{  \mathcal{H}_{ij} \log Q_{ij} +  (1-\mathcal{H}_{ij}) \log \dot{Q}_{ij}
		\right\}
	$ \text{Then} $\lim_{x \to \infty} \mathcal{L}_{\mathrm{cl}} - \mathcal{L}^p_{\mathrm{cl}} =0$  .
\end{lemma}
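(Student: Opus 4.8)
The plan is to show that $\mathcal{L}_{\mathrm{cl}}^p$ is just $\mathcal{L}_{\mathrm{cl}}$ rewritten as a sum of cross-entropy terms once the hard positive/negative indicator $\mathcal{H}_{ij}$ is plugged in, so that the difference vanishes identically (and a fortiori in the limit). First I would fix notation: index the $N_K+1$ candidates so that exactly one, say $j=+$, is the true positive (hence $\mathcal{H}_{i+}=1$) and the remaining $N_K$ are negatives drawn from $V^{-}$ (hence $\mathcal{H}_{ij}=0$). I would also make explicit the definitions that the paper leaves implicit: $Q_{ij} = \mathcal{Q}(\mathbf{z}_i,\mathbf{z}_j)\big/\big(\sum_{k}\mathcal{Q}(\mathbf{z}_i,\mathbf{z}_k)\big)$ is the softmax-normalized similarity (so that $\sum_j Q_{ij}=1$), and $\dot{Q}_{ij} = 1 - Q_{ij}$ is the complementary probability. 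With these conventions the "$\lim_{x\to\infty}$" in the statement should be read as $\nu\to\infty$ (or $N_K\to\infty$), i.e. a regime in which the soft labels $\mathcal{P}$ collapse to the hard labels $\mathcal{H}$; I would state which limit is meant at the outset.

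The main computation is then short. Starting from $\mathcal{L}_{\mathrm{cl}}^p = -\sum_{j=1}^{N_K+1}\{\mathcal{H}_{ij}\log Q_{ij} + (1-\mathcal{H}_{ij})\log\dot{Q}_{ij}\}$, I would split the sum at $j=+$: the $\mathcal{H}_{ij}\log Q_{ij}$ part contributes only the single term $\log Q_{i+}$, while the $(1-\mathcal{H}_{ij})\log\dot{Q}_{ij}$ part contributes $\sum_{j\neq +}\log(1-Q_{ij})$. The first term is exactly $-\log Q_{i+} = -\log\mathcal{Q}(\mathbf{z}_i,\mathbf{z}_i^+) + \log\big[\mathcal{Q}(\mathbf{z}_i,\mathbf{z}_i^+)+\sum_{\mathbf{z}_i^-\in V^-}\mathcal{Q}(\mathbf{z}_i,\mathbf{z}_i^-)\big]$, which is precisely $\mathcal{L}_{\mathrm{cl}}$. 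So $\mathcal{L}_{\mathrm{cl}}^p - \mathcal{L}_{\mathrm{cl}} = -\sum_{j\neq +}\log(1-Q_{ij})$, and it remains to argue this residual tends to $0$ in the relevant limit. Since each $Q_{ij}$ for $j\neq+$ is a softmax weight over $N_K+1$ candidates, one has $0 \le Q_{ij} \le 1$ and $\sum_{j\neq+} Q_{ij} \le 1$; using $-\log(1-t)\le t/(1-t)$ for $t\in[0,1)$, or simply that the similarities $\mathcal{Q}$ are bounded and the normalizer grows, I would bound $\sum_{j\neq+}-\log(1-Q_{ij})$ by a quantity controlled by $\max_{j\neq+}Q_{ij}$, which $\to 0$ as the number of negatives grows (each individual negative's normalized weight is squeezed to zero). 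Hence the difference $\to 0$.

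The step I expect to be the real obstacle is pinning down precisely what limit "$x\to\infty$" refers to and what auxiliary boundedness/regularity the similarity kernel $\mathcal{Q}$ must satisfy for the residual $\sum_{j\neq+}-\log(1-Q_{ij})$ to vanish — the statement as written is under-specified. I would handle this by adopting the most charitable reading: either (i) $\mathcal{Q}$ is the t-kernel of Eq.~(\ref{eq_tkernal}) with $\nu\to\infty$, in which case I would track the Gaussian limit of the kernel and show the off-positive mass collapses, or (ii) the number of negatives $N_K\to\infty$ with uniformly bounded pairwise similarities, in which case each $Q_{ij}=\Theta(1/N_K)$ and $\sum_{j\neq+}-\log(1-Q_{ij}) = O(1/N_K)\to 0$. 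In the write-up I would pick option (ii), state it as an explicit hypothesis, and relegate the identity $\mathcal{L}_{\mathrm{cl}}^p - \mathcal{L}_{\mathrm{cl}} = -\sum_{j\neq+}\log(1-Q_{ij})$ to the core of the argument, since that algebraic identity — not the limit — is the substantive content.
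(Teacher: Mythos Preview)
Your algebraic decomposition is clean, but the proof founders on two points that both differ from the paper's own argument.

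First, the definitions you adopt for $Q_{ij}$ and $\dot Q_{ij}$ are not the paper's. The paper takes $Q_{ij}=\exp(S(z_i,z_j))$ \emph{unnormalized} and $\dot Q_{ij}=\exp(-S(z_i,z_j))$, not the softmax weight and its complement. With those conventions the paper's approximation regime is not $N_K\to\infty$ at all: it is that the \emph{negative} similarities $S(z_i,z_k)$ are close to $0$ (so each $\exp(S(z_i,z_k))\approx 1$), which makes the ``geometric--arithmetic'' ratio $\prod_k\exp(S(z_i,z_k))\big/\bigl(\tfrac{1}{N_K}\sum_k\exp(S(z_i,z_k))\bigr)$ tend to $1$ and its logarithm to $0$. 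That is the entire mechanism of the paper's proof; your route never touches it.

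Second, and more seriously, under \emph{your} definitions the residual $\sum_{j\neq +}\bigl(-\log(1-Q_{ij})\bigr)$ does not vanish as $N_K\to\infty$. Since $-\log(1-t)\ge t$, the residual is bounded \emph{below} by $\sum_{j\neq+}Q_{ij}=1-Q_{i+}$, which stays bounded away from zero (indeed, adding negatives can only decrease $Q_{i+}$). Your claim that ``$Q_{ij}=\Theta(1/N_K)$ so $\sum_{j\neq+}-\log(1-Q_{ij})=O(1/N_K)$'' drops a factor of $N_K$: each summand is $\Theta(1/N_K)$, but there are $N_K$ of them, giving $\Theta(1)$. The ``controlled by $\max_{j\neq+}Q_{ij}$'' bound has the same defect. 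So option~(ii) of your plan fails outright, and option~(i) (the $\nu\to\infty$ Gaussian-kernel limit) would need a completely different argument that you have not sketched. To repair the proof you must either adopt the paper's unnormalized $Q_{ij},\dot Q_{ij}$ and its ``negative similarities near zero'' heuristic, or find a genuinely different regime in which your residual term actually collapses.
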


\begin{proof}

	We start with $L_\text{CL} =  - \log \frac{ \exp(S(z_i, z_j))}{\sum_{k=1}^{N_K} \exp(S(z_i, z_k))}$ (Eq.~(\ref{eq_CL})), then

	$$ L_\text{CL} = \log N_K - \log \frac{ \exp(S(z_i, z_j))}{\frac{1}{N_K}\sum_{k=1}^{N_K} \exp(S(z_i, z_k))}. $$

	We are only concerned with the second term that has the gradient. Let $(i,j)$ are positive pair and $(i,k_1), \cdots, (i,k_N) $ are negative pairs. The overall loss associated with point $i$ is:
	\begin{equation*}
		\begin{aligned}
			  & - \log \frac
			{\exp(S(z_i, z_j))}
			{ \frac{1}{N_K} \sum_{k=1}^{N_K} \exp(S(z_i, z_k))}                                                          \\
			= & - \left[
				\log \exp(S(z_i, z_j)) - \log
			{ \frac{1}{N_K} \sum_{k=1}^{N_K} \exp(S(z_i, z_k))} \right]                                                  \\
			= & - \left[
				\log \exp(S(z_i, z_j)) - \sum_{k=1}^{N_K} \log \exp(S(z_i, z_{k})) +
			\sum_{k=1}^{N_K} \log \exp(S(z_i, z_{k})) - \log { \frac{1}{N_K} \sum_{k=1}^{N_K} \exp(S(z_i, z_k))} \right] \\
			= & - \left[
				\log \exp(S(z_i, z_j)) - \sum_{k=1}^{N_K} \log \exp(S(z_i, z_{k})) +
			\log \Pi_{k=1}^{N_K}  \exp(S(z_i, z_{k})) - \log { \frac{1}{N_K} \sum_{k=1}^{N_K} \exp(S(z_i, z_k))} \right] \\
			= & - \left[
				\log \exp(S(z_i, z_j)) - \sum_{k=1}^{N_K} \log \exp(S(z_i, z_{k})) +
			\log \frac {\Pi_{k=1}^{N_K}  \exp(S(z_i, z_{k}))}{ \frac{1}{N_K} \sum_{k=1}^{N_K} \exp(S(z_i, z_k))} \right] \\
		\end{aligned}
	\end{equation*}

	We focus on the case where the similarity is normalized, $S(z_i, z_k) \in [0,1]$. The data $i$ and data $k$ is the negative samples, then $S(z_i, z_k)$ is near to $0$, $\exp(S(z_i, z_{k}))$ is near to $1$, thus the $\frac {\Pi_{k=1}^{N_K} \exp(S(z_i, z_{k}))}{ \frac{1}{N} \sum_{k=1}^{N_K} \exp(S(z_i, z_k))}$ is near to 1, and $\log \frac {\Pi_{k=1}^{N_K}  \exp(S(z_i, z_{k}))}{ \frac{1}{N} \sum_{k=1}^{N_K} \exp(S(z_i, z_k))}$ near to 0. We have

	\begin{equation*}
		\begin{aligned}
			L_\text{CL}
			 & \approx  - \left[
			\log \exp(S(z_i, z_j)) - \sum_{k=1}^{N_K} \log \exp(S(z_i, z_{k})) \right] \\
		\end{aligned}
	\end{equation*}

	We denote $ij$ and $ik$ by a uniform index and use $\mathcal{H}_{ij}$ to denote the homology relation of $ij$.

	\begin{equation*}
		\begin{aligned}
			L_\text{CL}
			 & \approx - \left[
			\log \exp(S(z_i, z_j)) - \sum_{k=1}^{N_K} \log \exp(S(z_i, z_{k})) \right]                                       \\
			 & \approx - \left[
			\mathcal{H}_{ij} \log \exp(S(z_i, z_j)) - \sum_{j=1}^{N_K} (1-\mathcal{H}_{ij}) \log \exp(S(z_i, z_{j})) \right] \\
			 & \approx - \left[
				\sum_{j=1}^{N_K+1} \left\{  \mathcal{H}_{ij} \log \exp(S(z_i, z_j)) +  (1-\mathcal{H}_{ij}) \log \{\exp(-S(z_i, z_{j}))\}
				\right\}
			\right]                                                                                                          \\
		\end{aligned}
	\end{equation*}

	we define the similarity of data $i$ and data $j$ as $Q_{ij} = \exp(S(z_i, z_j))$ and the dissimilarity of data $i$ and data $j$ as $\dot{Q}_{ij} =  \exp(-S(z_i, z_j))$.

	\begin{equation*}
		\begin{aligned}
			L_\text{CL} \approx - \left[
				\sum_{j=1}^{N_K+1} \left\{  \mathcal{H}_{ij} \log Q_{ij} +  (1-\mathcal{H}_{ij}) \log \dot{Q}_{ij}
				\right\}
			\right] \\
		\end{aligned}
	\end{equation*}
\end{proof}

\textbf{The proposed SCL loss is a smoother CL loss:}

This proof tries to indicate that the proposed SCL loss is a smoother CL loss. We discuss the differences by comparing the two losses to prove this point.
the forward propagation of the network is,
${z}_{i}=H(\hat{z}_{i}), \hat{z}_{i} =F(x_{i})$,
${z}_{j}=H(\hat{z}_{j}), \hat{z}_{j} =F(x_{j})$.
We found that we mix $y$ and $\hat{z}$ in the main text, and we will correct this in the new version. So, in this section
${z}_{i}=H(y_{i}), y_{i} =F(x_{i})$,
${z}_{j}=H(y_{j}), y_{j} =F(x_{j})$ is also correct.

Let $H(\cdot)$ satisfy $K$-Lipschitz continuity, then
$
	d^z_{ij} = k^* d^y_{ij} , k^* \in [1/K, K],
$
where $k^*$ is a Lipschitz constant. The difference between $L_\text{SCL}$ loss and $L_\text{CL}$ loss is,
\begin{equation}
	\begin{aligned}
		L_{\text{CL}}- L_\text{SCL} \approx
		\sum_j \biggl[
		\left(
		\mathcal{H}_{ij} - [1+(e^\alpha -1)\mathcal{H}_{ij}] \kappa \left(d^y_{ij} \right)
		\right)
		\log
		\left(
		\frac
		{1}
		{\kappa \left( d_{ij}^{z}\right)}
		-
		1
		\right)
		\biggl] .
	\end{aligned} \label{eq_sclcl}
\end{equation}
Because the  $\alpha > 0$, the proposed SCL loss is the soft version of the CL loss. if $\mathcal{H}_{ij}=1$, we have:

\begin{equation}
	\begin{aligned}
		(L_{\text{CL}} - L_{\text{SCL}})  |_{\mathcal{H}_{ij} =1} = \sum
		\biggl[
			\left(
			(1 - e^\alpha) \kappa \left( k^* d^z_{ij} \right)
			\right)
			\log
			\left(
			\frac
			{1}
			{\kappa \left( d_{ij}^{z}\right)}
			-
			1
			\right)
		\biggl] \\
	\end{aligned}
\end{equation}

then:

\begin{equation}
	\begin{aligned}
		\lim_{\alpha \to 0}
		( L_{\text{CL}} - L_{\text{SCL}} ) |_{\mathcal{H}_{ij} =1}
		= \lim_{\alpha \to 0} \sum
		\biggl[
			\left(
			(1 - e^\alpha) \kappa \left( k^* d^z_{ij} \right)
			\right)
			\log
			\left(
			\frac
			{1}
			{\kappa \left( d_{ij}^{z}\right)}
			-
			1
			\right)
			\biggl]              = 0
	\end{aligned}
	\label{eq:lim}
\end{equation}

Based on Eq.(\ref{eq:lim}), we find that if $i,j$ is neighbor~($\mathcal{H}_{ij}=1$) and $\alpha\to0$, there is no difference between the CL loss $L_\text{CL}$ and SCL loss $L_{\text{SCL}}$.
When if $\mathcal{H}_{ij}=0$, the difference between the loss functions will be the function of $d_{ij}^{z}$. The CL loss $L_\text{CL}$ only minimizes the distance between adjacent nodes and does not maintain any structural information. The proposed SCL loss considers the knowledge both comes from the output of the current bottleneck and data augmentation, thus less affected by view noise.

\vspace{5mm}

\textbf{Details of Eq.~(\ref{eq_sclcl}).}
Due to the very similar gradient direction, we assume $\dot{Q}_{ij} = 1-Q_{ij}$. The contrastive learning loss is written as,
\begin{equation}
	\begin{aligned}
		L_\text{CL}  \approx & - \sum
		\left\{
		\mathcal{H}_{ij}
		\log
		Q_{ij}
		+
		\left(1-\mathcal{H}_{ij} \right)
		\log
		\left(1-{Q}_{ij} \right)
		\right\}
	\end{aligned}
\end{equation}
where $\mathcal{H}_{ij}$ indicates whether $i$ and $j$ are augmented from the same original data.

The SCL loss is written as:

\begin{equation}
	\begin{aligned}
		L_{\text{SCL}} & =
		-
		\sum
		\left\{
		P_{ij}
		\log
		Q_{ij}
		+
		\left(1-P_{ij}\right)
		\log
		\left(1-Q_{ij}\right)
		\right\}
	\end{aligned}
	\label{eq:appendix_SCL}
\end{equation}

According to Eq.~(4) and Eq.~(5), we have

\begin{equation}
	\begin{aligned}
		P_{ij} & = R_{ij} \kappa(d^y_{ij}) = R_{ij} \kappa(y_i, y_j),
		R_{ij} = \left\{
		\begin{array}{lr}
			e^\alpha   \;\;\; \text{if} \;\; \mathcal{H}(x_i, x_j)=1 \\
			1  \;\;\;\;\;\;\;  \text{otherwise}                      \\
		\end{array}
		\right.,                                                      \\
		Q_{ij} & = \kappa(d_{ij}^z) = \kappa(z_i, z_j),
	\end{aligned}
\end{equation}

For ease of writing, we use distance as the independent variable, $d_{ij}^y=\|y_i- y_j\|_2$, $d_{ij}^z=\|z_i- z_j\|_2$.

The difference between the two loss functions is:

\begin{equation}
	\begin{aligned}
		  & L_\text{CL} - L_{\text{SCL}} \\
		= & -\sum\biggl[
			\mathcal{H}_{ij}
			\log \kappa \left( d_{ij}^{z}\right)
			+
			\left(1-\mathcal{H}_{ij} \right)
			\log \left(1-\kappa \left(d_{ij}^{z}\right)\right)
			-
			R_{ij}\kappa\left( d^y_{ij} \right)
			\log \kappa \left( d^z_{ij} \right)
			-
			\left(1- R_{ij}\kappa\left( d^y_{ij} \right)\right)
			\log \left(1-\kappa \left( d^z_{ij} \right)\right)
		\biggl]                          \\
		= & -\sum\biggl[
			\left(
			\mathcal{H}_{ij} -  R_{ij}\kappa\left(d^y_{ij} \right)
			\right)
			\log \kappa \left( d_{ij}^{z}\right)
			+
			\left(
			1-\mathcal{H}_{ij} -1 +  R_{ij}\kappa\left(d^y_{ij} \right)
			\right)
			\log \left(1-\kappa \left(d^z_{ij}\right)\right)
		\biggl]                          \\
		= & -\sum\biggl[
			\left(
			\mathcal{H}_{ij} - R_{ij} \kappa \left(d^y_{ij} \right)
			\right)
			\log \kappa \left( d_{ij}^{z}\right)
			+
			\left(
			R_{ij} \kappa \left(  d^y_{ij} \right) - \mathcal{H}_{ij}
			\right)
			\log \left(1-\kappa \left(d^z_{ij}\right)\right)
		\biggl]                          \\
		= & -\sum\biggl[
			\left(
			\mathcal{H}_{ij} - R_{ij}\kappa \left(d^y_{ij} \right)
			\right)
			\left(
			\log \kappa \left( d_{ij}^{z}\right)
			-
			\log \left(1-\kappa \left(d^z_{ij}\right)\right)
			\right)
		\biggl]                          \\
		= & \sum\biggl[
			\left(
			\mathcal{H}_{ij} - R_{ij} \kappa \left(d^y_{ij} \right)
			\right)
			\log
			\left(
			\frac
			{1}
			{\kappa \left( d_{ij}^{z}\right)}
			-
			1
			\right)
		\biggl]                          \\
	\end{aligned}
	\label{eq:appendix_diff_twoloss_3_0}
\end{equation}

Substituting the relationship between $\mathcal{H}_{ij}$ and $R_{ij}$, $R_{ij} = 1+(e^\alpha -1)\mathcal{H}_{ij}$, we have

\begin{equation}
	\begin{aligned}
		L_{\text{CL}} - L_{\text{SCL}}=\sum
		\biggl[
		\left(
		\mathcal{H}_{ij} - [1+(e^\alpha -1)\mathcal{H}_{ij}] \kappa \left(d^y_{ij} \right)
		\right)
		\log
		\left(
		\frac
		{1}
		{\kappa \left( d_{ij}^{z}\right)}
		-
		1
		\right)
		\biggl] \\
	\end{aligned}
	\label{eq:appendix_diff_twoloss_3_1}
\end{equation}

We assume that network $H(\cdot)$ to be a Lipschitz continuity function, then

\begin{equation}
	\begin{aligned}
		\frac{1}{K} H(d^z_{ij}) \leq d^y_{ij} \leq K H(d^z_{ij}) \quad \forall i, j \in \{1,2,\cdots,N\}
	\end{aligned}
\end{equation}

We construct the inverse mapping of $H(\cdot)$ to $H^{-1}(\cdot)$,

\begin{equation}
	\begin{aligned}
		\frac{1}{K} d^z_{ij} \leq d^y_{ij} \leq K d^z_{ij} \quad \forall i, j \in \{1,2,\cdots,N\}
	\end{aligned}
\end{equation}

and then there exists $k^*$:
\begin{equation}
	\begin{aligned}
		d^y_{ij} = k^* d^z_{ij} \quad k^* \in [1/K, K] \quad \forall i, j \in \{1,2,\cdots,N\}
	\end{aligned}
	\label{eq:g_revers}
\end{equation}

Substituting the Eq.(\ref{eq:g_revers}) into Eq.(\ref{eq:appendix_diff_twoloss_3_1}).

\begin{equation}
	\begin{aligned}
		L_{\text{CL}} - L_{\text{SCL}}=\sum
		\biggl[
		\left(
		\mathcal{H}_{ij} - [1+(e^\alpha-1)\mathcal{H}_{ij}] \kappa \left(k^* d^z_{ij}\right)
		\right)
		\log
		\left(
		\frac
		{1}
		{\kappa \left( d_{ij}^{z}\right)}
		-
		1
		\right)
		\biggl] \\
	\end{aligned}
	\label{eq:appendix_diff_twoloss_4}
\end{equation}

\section{Appendix: Details of DNA Sequence Experiments} \label{app_dna_exp}

\subsection{Experimental Setups and Datasets Information}

In our research, we utilized the 'genomic-benchmarks' dataset, a comprehensive collection of curated sequence classification datasets specifically designed for genomic studies. This repository encompasses a range of datasets derived from both novel compilations mined from publicly accessible databases and existing datasets gathered from published studies. It focuses on regulatory elements such as promoters, enhancers, and open chromatin regions from three model organisms: humans, mice, and roundworms. Accompanying these datasets is a simple convolutional neural network provided as a baseline model, enabling researchers to benchmark their algorithms effectively. The entire collection is made available as a Python package, facilitating easy integration with popular deep learning libraries and serving as a valuable resource for the genomics research community.

The initiative behind the 'genomic-benchmarks' dataset aims to address the critical need for standardized benchmarks in genomics, akin to the role that carefully curated benchmarks have played in advancing other biological fields, notably demonstrated by AlphaFold's success in protein folding. By offering a structured and easily accessible set of benchmarks, this collection not only promotes comparability and reproducibility in machine learning applications within genomics but also lowers the entry barrier for researchers new to this domain. Consequently, it fosters a healthy competitive environment that is likely to spur innovation and discovery in genomic research, paving the way for significant advancements in the understanding and annotation of genomes.

\begin{table}[h]
	\centering
	\caption{GenomicBenchmarks Dataset Metadata}
	\label{tab:dataset_metadata}
	\begin{tabular}{lccccc}
		\toprule
		\textbf{Name}                        & Acronyms & \textbf{Num. Seqs} & \textbf{Num. Classes} & \textbf{Median Len} & \textbf{Std} \\
		\midrule
		dummy\_mouse\_enhancers\_ensembl     & MoEnEn   & 1,210              & 2                     & 2,381               & 984.4        \\
		demo\_coding\_vs\_intergenomic\_seqs & CoIn     & 100,000            & 2                     & 200                 & 0            \\
		demo\_human\_or\_worm                & HuWo     & 100,000            & 2                     & 200                 & 0            \\
		human\_enhancers\_cohn               & HuEnCo   & 27,791             & 2                     & 500                 & 0            \\
		human\_enhancers\_ensembl            & HuEnEn   & 154,842            & 2                     & 269                 & 122.6        \\
		human\_ensembl\_regulatory           & HuEnRe   & 289,061            & 3                     & 401                 & 184.3        \\
		human\_nontata\_promoters            & HuNoPr   & 36,131             & 2                     & 251                 & 0            \\
		human\_ocr\_ensembl                  & HuOcEn   & 174,756            & 2                     & 315                 & 108.1        \\
		\bottomrule
	\end{tabular}
\end{table}

\section{Appendix: Details of Vision Experiments} \label{app_vision_exp}

\subsection{Dataset Setups}

Experiments are performed on CIFAR-10~[CF10]\footnote{https://www.cs.toronto.edu/~kriz/cifar.html} and CIFAR-100\footnote{https://www.cs.toronto.edu/~kriz/cifar.html}~[CF100]~\citep{krizhevsky2009learning}, STL10\footnote{https://cs.stanford.edu/~acoates/stl10/}~\citep{coates2011analysis}, TinyImageNet\footnote{https://www.kaggle.com/c/tiny-imagenet}~[TINet]~\citep{le2015tiny} dataset.

To compare with the two different baseline methods, the setting of the dataset is shown in Table.~\ref{at:6}.
\begin{table*}[h]
	\centering
	\caption{Dataset setting of linear-test Performance.}
	\begin{tabular}{cccccc}
		\hline \text { Dataset } & \text { Train Split }      & \text { Test Split } & \text { Train Samples } & \text { Test Samples } & \text { Classes } \\
		\hline \text { CF10 }    & \text { Train }            & \text { Test }       & 50,000                  & 10,000                 & 10                \\
		\text { CF100 }          & \text { Train }            & \text { Test }       & 50,000                  & 10,000                 & 100               \\
		\text { STL10 }          & \text { Train + Unlabeled} & \text { Test }       & 5,000+100,000           & 8,000                  & 10                \\
		\text { TINet }          & \text { Train }            & \text { Test }       & 100,000                 & 100,000                & 200               \\
		\hline
	\end{tabular}
	\label{at:6}
\end{table*}

\begin{table*}[h]
	\centering
	\caption{Dataset setting of clustering test.}
	\begin{tabular}{cccc}
		\hline \text { Dataset } & \text { Train \& Test Split } & \text { Train \& Test Samples } & \text { Classes } \\
		\hline \text { CF10 }    & \text { Train+Test }          & 60,000                          & 10                \\
		\text { CF100 }          & \text { Train+Test }          & 60,000                          & 20                \\
		\text { STL10 }          & \text { Train+Test }          & 13,000                          & 10                \\
		\text { TIN }            & \text { Train }               & 100,000                         & 200               \\
		\hline
	\end{tabular}
	\label{at:8}
\end{table*}

\subsection{Baseline Methods and Implementation Details}

The contrastive learning methods, including SimCLR~\citep{chen_simple_2020}, MOCO v2~\citep{he_momentum_2020}, BYOL~\citep{grill_bootstrap_2020}, SimSiam~\citep{chen2021exploring}, and DLME~\citep{zang2022dlme} are chosen for comparison. The SimC.+Mix. and MoCo.+Mix. are SimCLR and MoCoV2 with Mixup data augmentation which processed by \cite{zhang2022m}. The SimC.+Dif. and MoCo.+Mix. are SimCLR and MoCoV2 with DiffAug data augmentation. Improvements over the best baseline are shown in parentheses.

For the Linear-test performance assessment, we followed a procedure similar to SimCLR~\citep{chen_simple_2020}. We evaluated the model's representations linearly on top of the frozen features. This ensures that the quality of the representations is attributed only to the pre-training task, without any influence from potential fine-tuning. We used the ResNet-50~\citep{he_deep_2015} backbone as the encoder and a standard diffusion backbone as diffusion model~(in code below).
In contrast, for DiffAug, its semantic encoder served as the contrastive learning backbone, trained using DiffAug-augmented images. For the kMeans clustering evaluation, we extracted feature vectors from the models, leaving out the top classification layer. We then applied kMeans clustering to these features. The primary metric for evaluation was clustering accuracy.

\begin{lstlisting}[language=Python, caption=DiffusionModel for Vision Task]
class DiffusionModelVision(nn.Module):
    def __init__(self, c_in=3, c_out=3, time_dim=256):
        super().__init__()
        self.time_dim = time_dim
        self.remove_deep_conv = remove_deep_conv
        self.inc = DoubleConv(c_in, 16)
        self.down1 = Down(16, 32)
        self.sa1 = SelfAttention(32)
        self.down2 = Down(32, 64)
        self.sa2 = SelfAttention(64)
        self.down3 = Down(64, 64)
        self.sa3 = SelfAttention(64)
        self.up1 = Up(128, 32)
        self.sa4 = SelfAttention(32)
        self.up2 = Up(64, 16)
        self.sa5 = SelfAttention(16)
        self.up3 = Up(32, 16)
        self.sa6 = SelfAttention(16)
        self.outc = nn.Conv2d(16, c_out, kernel_size=1)
        def pos_encoding(self, t, channels):
        inv_freq = 1.0 / (10000 ** (torch.arange(0, channels, 2, device=one_param(self).device).float() / channels))
        pos_enc_a = torch.sin(t.repeat(1, channels // 2) * inv_freq)
        pos_enc_b = torch.cos(t.repeat(1, channels // 2) * inv_freq)
        pos_enc = torch.cat([pos_enc_a, pos_enc_b], dim=-1)
        return pos_enc
    
    def forward(self, x, t):
        t = t.unsqueeze(-1)
        t = self.pos_encoding(t, self.time_dim)
        return self.unet_forwad(x, t)
\end{lstlisting}

Our training strategy is as follows: A-step: 200 epochs $\to$ B-Step: 400 epoch $\to$ A-Step: 800 epoch. Continued training will further improve performance, but we did not increase the amount of computation due to computational resource constraints. The time loss of the method does improve due to the use of the diffusion model. However, on small datasets, this boost is acceptable. In this way at the same time DiffAug gives the possibility to accomplish unsupervised comparison learning training on small datasets.

\begin{table}[!htbp]
	\begin{center}
		\caption{Details of the training process in vision dataset.}
		\label{app_param_time_vis}
		\begin{tabular}{lcccccccc}
			\hline
			CF10   & $\nu$ & Learning Rate & Weight Decay & Batch Size & GPU    & pix            & Training Time & \\ \hline
			CF10   & 1     & 0.001         & 1e-6         & 256        & 1*V100 & 32$\times$32   & 7.1 hours     & \\
			CF100  & 2     & 0.001         & 1e-6         & 256        & 1*V100 & 32$\times$32   & 7.2 hours     & \\
			STL10  & 5     & 0.001         & 1e-6         & 256        & 1*V100 & 96$\times$96   & 15.1 hours    & \\
			TINet  & 3     & 0.001         & 1e-6         & 256        & 1*V100 & 64$\times$64   & 20.6 hours    & \\ \hline
		\end{tabular}
	\end{center}
\end{table}

\begin{figure}
	\centering
	\includegraphics[width=0.99\linewidth]{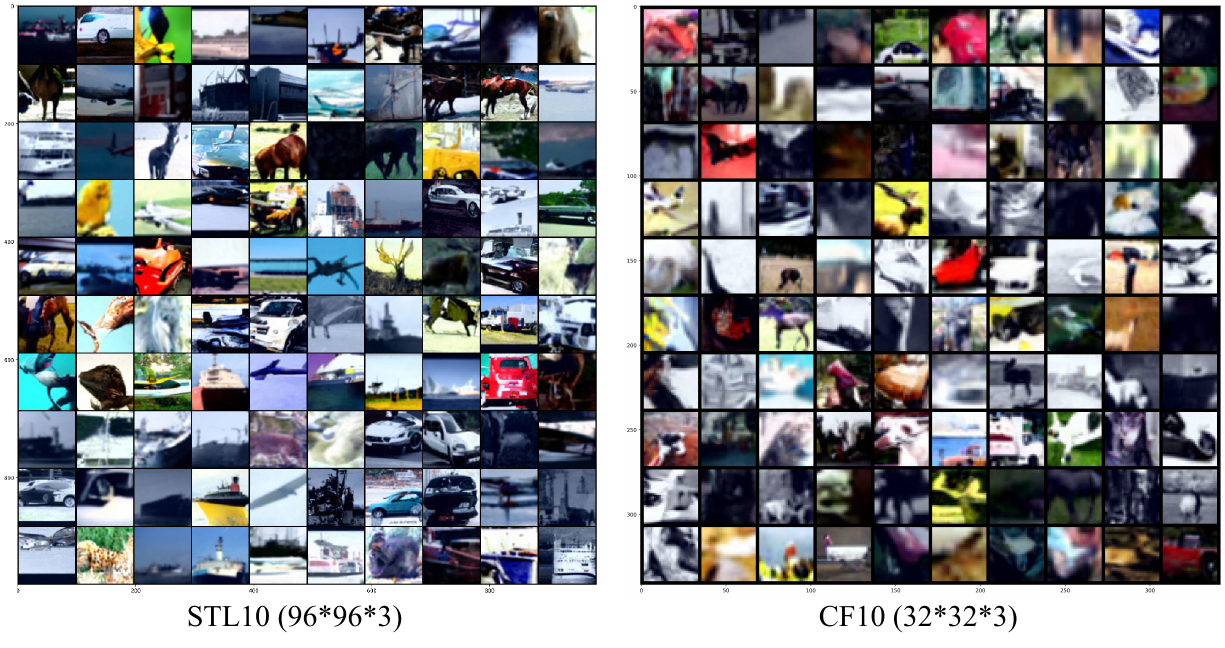}
	\caption{\textbf{The display of original and generated images illustrates that DiffAug generates semantically similar augmented images.} Ori means original image and Aug1, Aug2 and Aug3 are augmentated images. More detailed results are in the appendix.}
	\label{app_vis_main}
\end{figure}

\subsection{Data Augmentation of the Compared Methods} \label{app_datasug}

\paragraph{BYOL augmentation.}
The BYOL augmentation method is a hand-designed method. It is composed of four parts: random cropping, left-right flip, color ji
\begin{itemize}
	\item Random cropping: A random patch of the image is selected, with an area uniformly sampled between 8\% and 100\% of that of the original image, and an aspect ratio logarithmically sampled between 3/4 and 4/3. This patch is then resized to the target size of 224 $\times$ 224 using bicubic interpolation.
	\item Optional left-right flip.
	\item Color jittering: The brightness, contrast, saturation, and hue of the image are shifted by a uniformly random offset applied to all the pixels of the same image. The order in which these shifts are performed is randomly selected for each patch.
	\item Color dropping: An optional conversion to grayscale. When applied, the output intensity for a pixel (r, g, b) corresponds to its luma component, computed as 0.2989r + 0.5870g + 0.1140b1.
\end{itemize}

\paragraph{SimCLR augmentation.}
\begin{itemize}
	\item Random Cropping: This involves taking a random crop of the image and then resizing it back to the original size. This can be seen as a combination of zooming and spatial location changes.
	\item Random Flipping: Randomly flip the image horizontally.
	\item Color Distortion: Apply a random color distortion. In the SimCLR paper, they use a combination of random brightness, random contrast, random saturation, and random hue changes. The strength of these distortions is controlled by a factor.
	\item Gaussian Blur: Apply a random Gaussian blur to the image. The extent of blurring is controlled by a factor.
\end{itemize}

\paragraph{MoCo v2 augmentation.}
For MoCo v2, the data augmentations are similar to those used in SimCLR, but there might be slight differences in implementation details. Here are the main augmentations used in MoCo v2:
\begin{itemize}
	\item Random Cropping: This involves taking a random crop of the image and then resizing it back to the original size.
	      Random Flipping: Randomly flip the image horizontally.
	\item Color Jitter: Randomly change the brightness, contrast, saturation, and hue of the image.
	\item Gaussian Blur: Apply Gaussian blur to the image with a certain probability.
	\item  Solarization: This is an augmentation introduced in MoCo v2. It inverts pixel values above a threshold, which can create a unique visual effect.
\end{itemize}

\paragraph{MAE augmentation.}
The core idea behind MAE is to mask out parts of an image and then train an autoencoder to reconstruct the original image from the masked version. This is somewhat analogous to the masked language modeling task used in models like BERT for NLP, where parts of the text are masked out and the model is trained to predict the masked words.

\section{Appendix: Details of Biology Experiments} \label{app_bio_exp}
\subsection{Dataset Setups}

Experiments are performed on biological datasets, including MC1374\footnote{https://bis.zju.edu.cn/MCA/}~\citep{han2018mapping}, GA1457\footnote{https://maayanlab.cloud/Harmonizome/gene/GAST}~\citep{rouillard2016harmonizome}, SAM\footnote{https://github.com/abbioinfo/CyAnno}~\citep{weber2016comparison}, , and HCL500\footnote{https://db.cngb.org/HCL/}~\citep{han2020construction} datasets.

To ensure a fair comparison, we first embed the data into a 2D space using the method under evaluation. We then assess the method's performance through 10-fold cross-validation. Classification accuracy is determined by applying a linear SVM classifier in the latent space, while clustering accuracy is gauged using k-means clustering in the same space. Further details about the datasets, baseline methods, and evaluation metrics can be found in Table \ref{at:5}.

\begin{table*}[h]
	\centering
	\caption{Datasets information of simple manifold embedding task}
	\begin{tabular}{c|cccc}
		\toprule
		{ Dataset } & { Train Samples } & { Test Samples } & Input Dimension & Class Number in label \\ \midrule
		{ MC1374 }  & { 24,000 }        & { 6,000 }        & { 1,374 }       & { 98 }                \\
		{ GA1457 }  & { 8,510 }         & { 2,127 }        & { 1,457 }       & { 49 }                \\
		{ SAM561 }  & { 69,491 }        & { 17,373 }       & { 561 }         & { 52 }                \\
		{ HCL500 }  & { 48,000 }        & { 12,000 }       & { 500 }         & { 45 }                \\
		\bottomrule
	\end{tabular}
	\label{at:5}
\end{table*}

\subsection{Baseline Methods and Implementation Details}

Dimension reduction methods that have been widely used on biological analyze are compared, including kPCA~\citep{halko2010finding}, Ivis~\citep{szubert_structure_preserving_2019}, PHATE~\citep{moon2019visualizing_}, PUMAP~\citep{sainburg_parametric_2021}, PaCMAP~\citep{Yingfan_pacmap}, DMTEV~\citep{zang2022evnet} and hNNE~\citep{sarfraz2022hierarchical}.

For DiffAug, both the semantic encoder \(\text{Enc}(\cdot)\), and the diffusion generator \(\text{Gen}(\cdot)\), are implemented using a Multi-Layer Perceptron (MLP). Their respective architectures are defined as: \(\text{Enc}(\cdot)\): [-1, 500, 300, 80].  The \(\text{Gen}(\cdot)\): is defined below,
\clearpage
{
\begin{lstlisting}[language=Python, caption=DiffusionModel for Biology Task]
    class AE(nn.Module):
    def __init__( self,in_dim, mid_dim=2000, time_step=1000,):
        super().__init__()
        self.enc1 = self.diff_block(in_dim, mid_dim)
        self.enc2 = self.diff_block(in_dim, mid_dim)
        self.enc3 = self.diff_block(in_dim, mid_dim)
        self.enc4 = self.diff_block(in_dim, mid_dim)

        self.dec1 = self.diff_block(in_dim, mid_dim)
        self.dec2 = self.diff_block(in_dim, mid_dim)
        self.dec3 = self.diff_block(in_dim, mid_dim)
        self.dec4 = self.diff_block(in_dim, mid_dim)
        self.time_encode = nn.Embedding(time_step, in_dim)
    
    def diff_block(in_dim, mid_dim):    
        return nn.Sequential(
        nn.LeakyReLU(), nn.InstanceNorm1d(in_dim),
        nn.Linear(in_dim, mid_dim), nn.LeakyReLU(),
        nn.InstanceNorm1d(mid_dim), nn.Linear(mid_dim, in_dim),)

    def forward(self, input, time, cond=None):
        input_shape = input.shape
        if len(input.size()) > 2:
            input = input.view(input.size(0), -1)
        ti = self.time_encode(time)
        cd = self.cond_model(cond).reshape(input.shape[0], -1)
        ee1 = self.enc1(input + ti + cd)
        ee2 = self.enc2(ee1 + ti+ cd) + ee1
        ee3 = self.enc3(ee2 + ti+ cd) + ee1 + ee2
        ee4 = self.enc4(ee3 + ti+ cd) + ee1 + ee2 + ee3

        ed1 = self.dec1(ee4 + ti+ cd)
        ed2 = self.dec2(ed1 + ti+ cd) + ee3 + ed1
        ed3 = self.dec3(ed2 + ti+ cd) + ee2 + ed1 + ed2
        ed4 = self.dec4(ed3 + ti+ cd) + ee1 + ed1 + ed2 + ed3
        return ed4.reshape(input_shape)
\end{lstlisting}
\tiny
}

To assess the efficacy of the proposed methods, following \cite{Yingfan_pacmap, sarfraz2022hierarchical}, we utilized linear SVM performance to evaluate the performance of differences methods. For the linear SVM evaluation, embeddings were partitioned with 90\% designated for training and 10\% for testing; the training set facilitated the linear SVM training, while the test set yielded the performance metrics. Detailed specifics of this configuration are elaborated in the Table \ref{app_param_time_bio}. 

\begin{table}[!htbp]
	\begin{center}
		\caption{Details of the training process in biological dataset.}
		\label{app_param_time_bio}
		\begin{tabular}{lcccccccc}
			\hline
			CF10  & $\nu$ & Learning Rate & Weight Decay & Batch Size & GPU    & Training Time & \\ \hline
			MC1374 & 1     & 0.0001        & 1e-6         & 300        & 1*V100 & 4.2 hours     & \\
			GA1457 & 1     & 0.0001        & 1e-6         & 300        & 1*V100 & 4.6 hours     & \\
			SAM561 & 1     & 0.0001        & 1e-6         & 300        & 1*V100 & 12.1 hours    & \\
			HCL500 & 0.1     & 0.0001        & 1e-6         & 300        & 1*V100 & 20.1 hours    & \\ \hline
		\end{tabular}
	\end{center}
\end{table}

\begin{figure}
	\centering
	\includegraphics[width=0.99\linewidth]{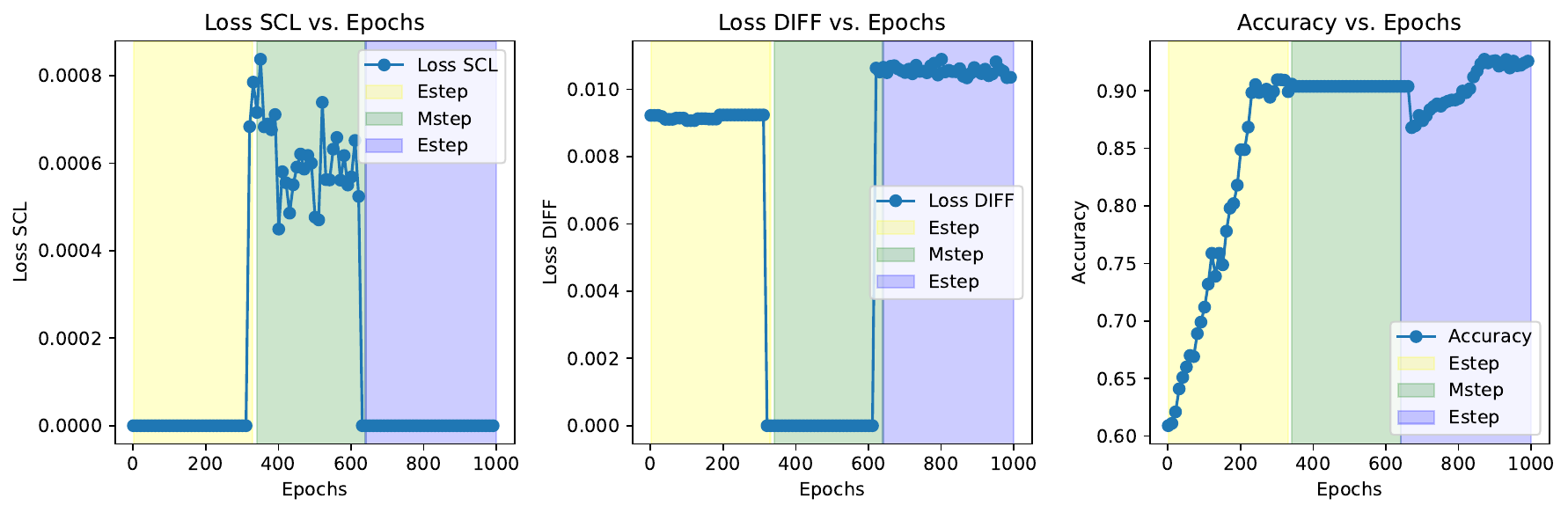}
	\caption{Training curves on the GA1457 dataset, including two Esteps and one Mstep. We can observe that the new generated data improves the correctness of E step.}
	\label{app_train_acc}
\end{figure}

\end{document}